\documentclass[conference]{IEEEtran}
\IEEEoverridecommandlockouts
% The preceding line is only needed to identify funding in the first footnote. If that is unneeded, please comment it out.
\usepackage{cite}
\usepackage{amsmath,amssymb,amsfonts}
\usepackage{algorithmic}
\usepackage{graphicx}
\usepackage{textcomp}
\usepackage{xcolor}
\def\BibTeX{{\rm B\kern-.05em{\sc i\kern-.025em b}\kern-.08em
    T\kern-.1667em\lower.7ex\hbox{E}\kern-.125emX}}

%Additional pacakges and commands%
\usepackage[utf8]{inputenc} % allow utf-8 input
\usepackage[T1]{fontenc}    % use 8-bit T1 fonts
\usepackage{hyperref}       % hyperlinks
\usepackage{url}            % simple URL typesetting
\usepackage{booktabs}       % professional-quality tables
\usepackage{amstext,amsthm,mathrsfs}       % blackboard math symbols
\usepackage{nicefrac}       % compact symbols for 1/2, etc.
\usepackage{microtype}      % microtypography
\usepackage{enumitem}
\usepackage{epstopdf}
\usepackage{cleveref}
\usepackage{algorithm}
\usepackage[algo2e]{algorithm2e}
\usepackage[normalem]{ulem}

\newtheorem{lemma}{Lemma}
\newtheorem{thm}{Theorem}

\newtheorem{assum}{Assumption}

\allowdisplaybreaks  %Allow equations to cross multiple pages
%%%%%%%%%%%%%%%%%%%%%

\begin{document}

\title{FedCluster: Boosting the Convergence of Federated Learning via Cluster-Cycling\\
%{\footnotesize \textsuperscript{*}Note: Sub-titles are not captured in Xplore and
%should not be used}
\thanks{*These authors contributed equally to this work.}
}

\author{\IEEEauthorblockN{Cheng Chen\textsuperscript{*}}
\IEEEauthorblockA{Department of ECE \\
{University of Utah}\\
%Salt lake city, US \\
u0952128@utah.edu}
\and
\IEEEauthorblockN{Ziyi Chen\textsuperscript{*}}
\IEEEauthorblockA{Department of ECE \\
{University of Utah}\\
%Salt lake city, US \\
ziyi.chen@utah.edu}
\and
\IEEEauthorblockN{Yi Zhou}
\IEEEauthorblockA{Department of ECE \\
{University of Utah}\\
%Salt lake city, US \\
yi.zhou@utah.edu}
\and
%\IEEEauthorblockN{}
%\IEEEauthorblockA{\\
%~\\
%~\\
%~}
%\and
\IEEEauthorblockN{Bhavya Kailkhura}
\IEEEauthorblockA{{Lawrence Livermore National Lab} \\
San Francisco, US \\
kailkhura1@llnl.gov}
}
%\and
%\IEEEauthorblockN{6\textsuperscript{th} Given Name Surname}
%\IEEEauthorblockA{\textit{dept. name of organization (of Aff.)} \\
%\textit{name of organization (of Aff.)}\\
%City, Country \\
%email address}
%}

\maketitle

\begin{abstract}
	We develop FedCluster -- a novel federated learning framework with improved optimization efficiency, and investigate its theoretical convergence properties. The FedCluster groups the devices into multiple clusters that perform federated learning cyclically in each learning round. Therefore, each learning round of FedCluster consists of multiple cycles of meta-update that boost the overall convergence. In nonconvex optimization, we show that FedCluster with the devices implementing the local {stochastic gradient descent (SGD)} algorithm achieves a faster convergence rate than the conventional {federated averaging (FedAvg)} algorithm in the presence of device-level data heterogeneity. We conduct experiments on deep learning applications and demonstrate that FedCluster converges significantly faster than the conventional federated learning under diverse levels of device-level data heterogeneity for a variety of local optimizers. 
\end{abstract}

\begin{IEEEkeywords}
	Federated learning, clustering, SGD
\end{IEEEkeywords}

\section{Introduction}
\vspace{-5pt}
Federated learning has become an emerging distributed machine learning framework that enables edge computing at a large scale \cite{konevcny2016federated,li2020federated,mcmahan2016communication}. As opposed to traditional centralized machine learning that collects all the data at a central server to perform learning, federated learning exploits the distributed computation power and data of a massive number of edge devices to perform distributed machine learning while preserving full data privacy. In particular, federated learning has been successfully applied to the areas of Internet of things (IoT), autonomous driving, health care, etc. \cite{li2020federated}

The original federated learning framework was proposed in \cite{mcmahan2016communication}, where the federated averaging (FedAvg) training algorithm was developed. Specifically, in each learning round of FedAvg, a subset of devices are activated to  download a model from the cloud server and train the model using {their} local data for multiple {stochastic gradient descent (SGD)} iterations. Then, the devices upload the trained local models to the cloud, where the local models are aggregated and averaged to obtain an updated global model to be used in the next round of learning. In this learning process, the data are privately kept in the devices. However, the convergence rate of the FedAvg algorithm is heavily affected by the device-level data heterogeneity of the devices, which has been shown both empirically and theoretically to slow down the convergence of FedAvg \cite{li2019convergence,zhao2018federated}.

To alleviate the negative effect of device-level data heterogeneity and facilitate the convergence of federated learning, various algorithms have been developed in the literature. For example, the {federated proximal (FedProx)} algorithm studied in \cite{li2018federated, sahu2018convergence} proposed to regularize the local loss function with the square distance between the local model and the global model, which helps to reduce the heterogeneity of the local models. Other federated learning algorithms apply variance reduction techniques to reduce the variance of local stochastic gradients caused by device-level data heterogeneity, e.g., FedMAX \cite{chen2020fedmax}, FEDL \cite{dinh2019federated}, VRL-SGD \cite{liang2019variance}, FedSVRG \cite{konevcny2016federated, nagar2019privacy} and PR-SPIDER \cite{sharma2019parallel}. {Moreover, \cite{karimireddy2019scaffold} uses control variates to control the local model difference, which is similar to the variance reduction techniques.} {The MIME framework \cite{karimireddy2020mime} combines arbitrary local update algorithms with server-level statistics to control local model difference. In FedNova \cite{wang2020tackling}, the devices adopt different numbers of local SGD iterations and use normalized local gradients to adapt to the device-level data heterogeneity.} Although these federated learning algorithms achieve improved performance over the traditional FedAvg algorithm, their optimization efficiency is unsatisfactory due to the infrequent model update in the cloud, i.e., every round of these federated learning algorithms only performs one global update (i.e., the model average step) to update the global model. Such an update rule does not fully utilize the big data to make the maximum optimization progress, and the convergence rates of these federated learning algorithms do not benefit from the large population of the devices. Therefore, it is much desired to develop an advanced federated learning framework that achieves improved optimization convergence and efficiency with the same amount of resources as those of the existing federated learning frameworks.

In this paper, we propose a novel federated learning framework named FedCluster, in which the devices are grouped into multiple clusters to perform federated learning in a cyclic way. The proposed federated learning framework has the flexibility to implement any conventional federated learning algorithms and can boost their convergence with the same amount of computation \& communication resources. We summarize our contributions as follows.

\newpage

%\vspace{-3pt}
\subsection{Our Contributions}
%\vspace{-3pt}
We propose FedCluster, a novel federated learning framework that groups the devices into multiple clusters for flexible management of the devices. In each learning round, the clusters of devices are activated to perform federated learning in a cyclic order. In particular, the FedCluster framework has the following amenable properties: 1) The clusters of devices in FedCluster can implement any federated learning algorithm, e.g., FedAvg, FedProx, etc. Hence, FedCluster provides a general optimization framework for federated learning; and 2) In each learning round, FedCluster updates the global model multiple times while consuming the same amount of computation \& communication resources as those consumed by the traditional federated learning framework. 
%Therefore, FedCluster boosts the convergence of the implemented algorithms. 
%\underline{(It seems that the learning is conducted cyclicly within each cluster, separately among the clusters?}
%\underline{How about ``cyclically among the clusters'' or ``sequentially through the clusters''?)} 

Theoretically, {we proved that in nonconvex optimization,} FedCluster with the devices implementing the local SGD algorithm {converges at a sub-linear rate $\mathcal{O}(\frac{1}{\sqrt{TME}})$, where $T,M,E$ correspond to the number of learning rounds, clusters and local SGD iterations, respectively. As a comparison, the convergence rate of the conventional FedAvg is in the order of $\mathcal{O}(\frac{1}{\sqrt{TE}})$ \cite{li2019communication}, which is slower than FedCluster with local SGD by a factor of $\sqrt{M}$. In addition, our convergence rate only depends on cluster-level data heterogenity denoted as $H_{\text{cluster}}$ (see \eqref{H_cluster}), while FedAvg depends on a device-level data heterogeneity $H_{\textrm{device}}$ that is larger than $H_{\text{cluster}}$.} Therefore, FedCluster with local SGD achieves a faster convergence rate and suffers less from the device-level data heterogeneity than FedAvg.  
%\blue{(It was originally ``comparison to'', which is used for metaphor and might cause misunderstanding of ``similar to FedAvg'', ``comparison with'' means to compare similarity and differences)}

%Compared \underline{with} the convergence result of FedAvg in which the rate is affected by the device-level data heterogeneity \cite{li2019convergence}, the convergence rate of FedCluster is affected by the smaller cluster-level data heterogeneity. Moreover, the convergence rate of FedAvg scales with $\frac{1}{TE}$, whereas the convergence rate of FedCluster is further reduced by a factor of $M$, which benefits from the multiple cloud updates per learning round. Consequently, when the problem has a good condition number, we show that FedCluster takes $M$ times less learning rounds to achieve a desired solution accuracy compared to that required by the FedAvg. We also analyze the convergence of FedCluster in smooth nonconvex optimization and \underline{obtain} similar results and comparisons.

Empirically, we compare the performance of FedCluster with that of the conventional centralized federated learning with different local optimizers in deep learning applications. We show that FedCluster achieves significantly faster convergence than the conventional centralized federated learning under different levels of device-level data heterogeneity and local optimizers. We also explore the impact of the number of clusters and cluster-level data heterogeneity on the performance of FedCluster.

%\vspace{-3pt}
\subsection{Related Work}
%\vspace{-3pt}
The conventional federated learning framework along with the FedAvg algorithm was first introduced in \cite{mcmahan2016communication}. The convergence rate of FedAvg was studied in strongly convex optimization \cite{koloskova2020unified,li2019convergence,woodworth2020minibatch}, {convex optimization \cite{koloskova2020unified,khaled2019first,bayoumi2020tighter,woodworth2020minibatch}} and nonconvex optimization \cite{koloskova2020unified,li2019communication}. {\cite{koloskova2020unified} studied the convergence rate of decentralized SGD, a generalization of FedAvg.} To address the device-level data heterogeneity {issue} in federated learning, \cite{li2018federated, sahu2018convergence} proposed the FedProx algorithm that adds the square distance between the local model and the global model to the local loss function, \cite{reddi2020adaptive} applied adaptive optimizers such as Adagrad, Adam, and YOGI to the update of the global model, and variance reduction techniques have been applied to develop advanced federated learning algorithms including Federated SVRG \cite{konevcny2016federated, nagar2019privacy}, FEDL \cite{dinh2019federated}, FedMAX \cite{chen2020fedmax}, VRL-SGD \cite{liang2019variance} and PR-SPIDER \cite{sharma2019parallel}. \cite{yao2019federated} proposed a FedMeta algorithm that uses a keep-trace gradient descent strategy originated from model-agnostic meta-learning \cite{finn2017model}, but the local updates involve high-order derivatives. \cite{ramazanli2020adaptive} developed an ASD-SVRG algorithm to address the heterogeneity of the local smoothness parameters, but the algorithm suffers from a high computation cost due to frequent communication and full gradient computation in the variance reduction scheme. {\cite{sattler2019clustered} performed FedAvg and bisection clustering in turn until convergence, which is time consuming without theoretical convergence guarantee.} {\cite{castiglia2020multi} also used clustering scheme where each cluster performs FedAvg and periodically communicate with neighbor clusters in a decentralized manner.} The semi-federated learning framework \cite{chen2020semi} also proposed to cluster the devices, but the devices take only one local SGD update per learning round and require D2D communication. The Federated Augmentation (FAug) \cite{jeong2018communication} used generative adversarial networks to augment the local device data{set} to make it closer to an i.i.d. dataset. 

Some other works focus on personalized federated learning. For example, {\cite{eichner2019semi} trained different local models instead of a shared global model.} \cite{deng2020adaptive,hanzely2020federated} trained a mixture of global and local models and \cite{agarwal2020federated} combined the global and local models to make prediction for online learning. Readers can refer to \cite{kulkarni2020survey} for an overview of personalized federated learning. 
Other studies of federated learning include privacy \cite{hu2020cpfed,jin2020stochastic,wei2020framework}, adversarial attacks \cite{bhagoji2018analyzing,jin2020stochastic} and fairness \cite{mohri2019agnostic}. Please refer to \cite{kairouz2019advances} for a comprehensive overview of these topics.

\section{FedCluster: Federated Learning with Cluster-Cycling}
%\vspace{-5pt}
In this section, we introduce the FedCluster framework and compare it with the traditional federated learning framework. To provide a fair comparison, we impose the following practical resource constraint on the devices in federated learning systems. 

\begin{assum}[Resource constraint]\label{assum: resource}
	In each round of federated learning, every device has a {maximum} budget of two communications (one download and one upload) and a fixed number of local updates. 
\end{assum} 

Fig. \ref{fig: 1} (Left) illustrates a particular learning round of the traditional centralized federated learning system. To elaborate, in each learning round,  a subset of devices are activated to download a model from the cloud server and train the model using {their} local data for multiple iterations. The training algorithm can be any stochastic optimization algorithm, e.g., local SGD in FedAvg \cite{mcmahan2016communication} and regularized local SGD in FedProx \cite{li2018federated}. After the local training, the activated devices upload their trained local models to the cloud, where the local models are aggregated and averaged to obtain an updated model to be used in the next learning round. We note that the devices of the traditional federated learning system satisfy the resource constraint in \Cref{assum: resource}. However, the convergence of the traditional federated learning process is slowed down by two important factors: 1) the heterogeneous local datasets of the devices and 2) the infrequent global model update in each learning round. 

\begin{figure*}[htbp]%[bth]
	\vspace{-2mm}
	\begin{minipage}{.5\textwidth}
		\centering
	\includegraphics[width=.8\linewidth]{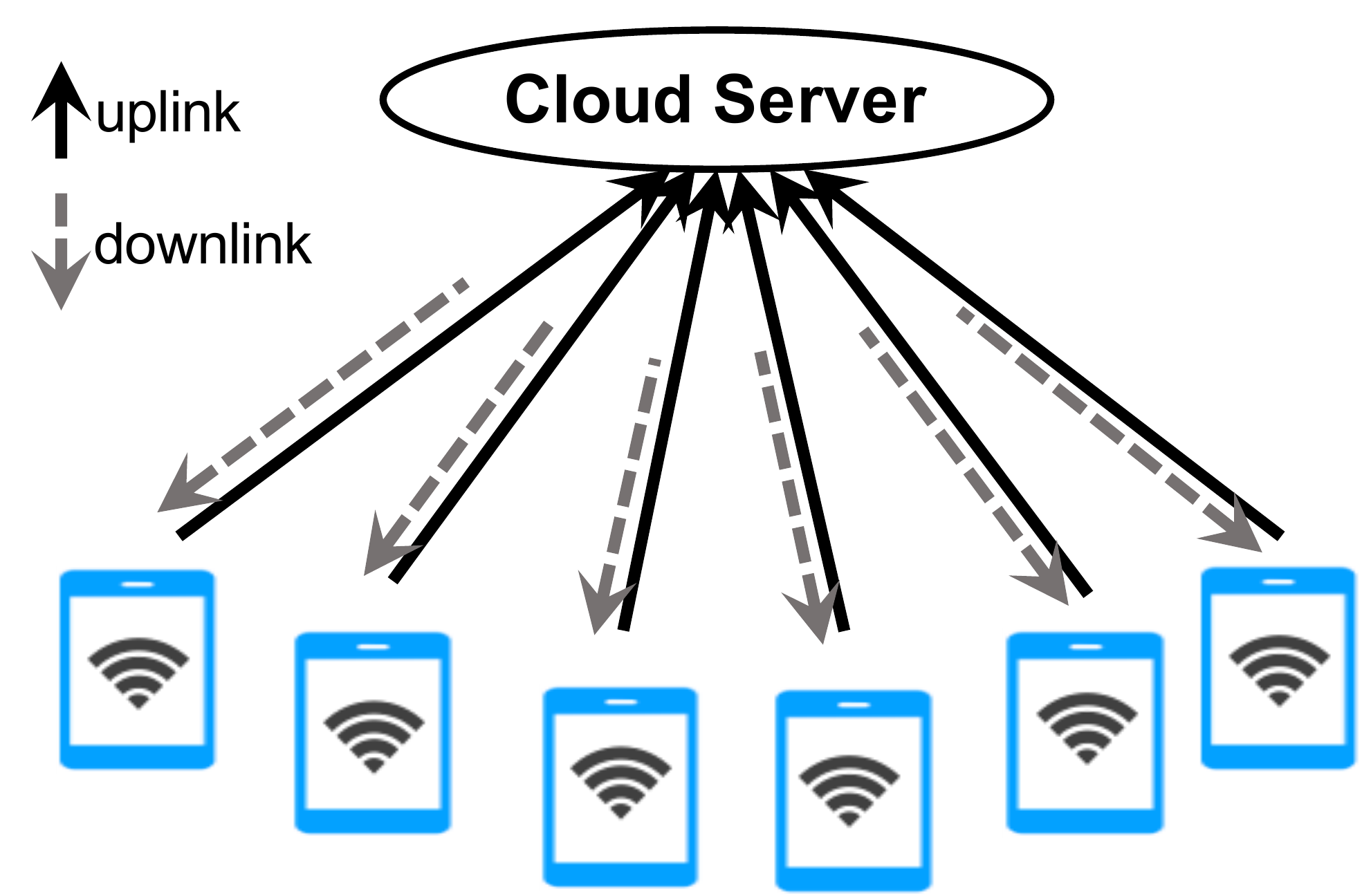}
	\end{minipage} 
	\begin{minipage}{.5\textwidth}
		\centering
	\includegraphics[width=.7\linewidth]{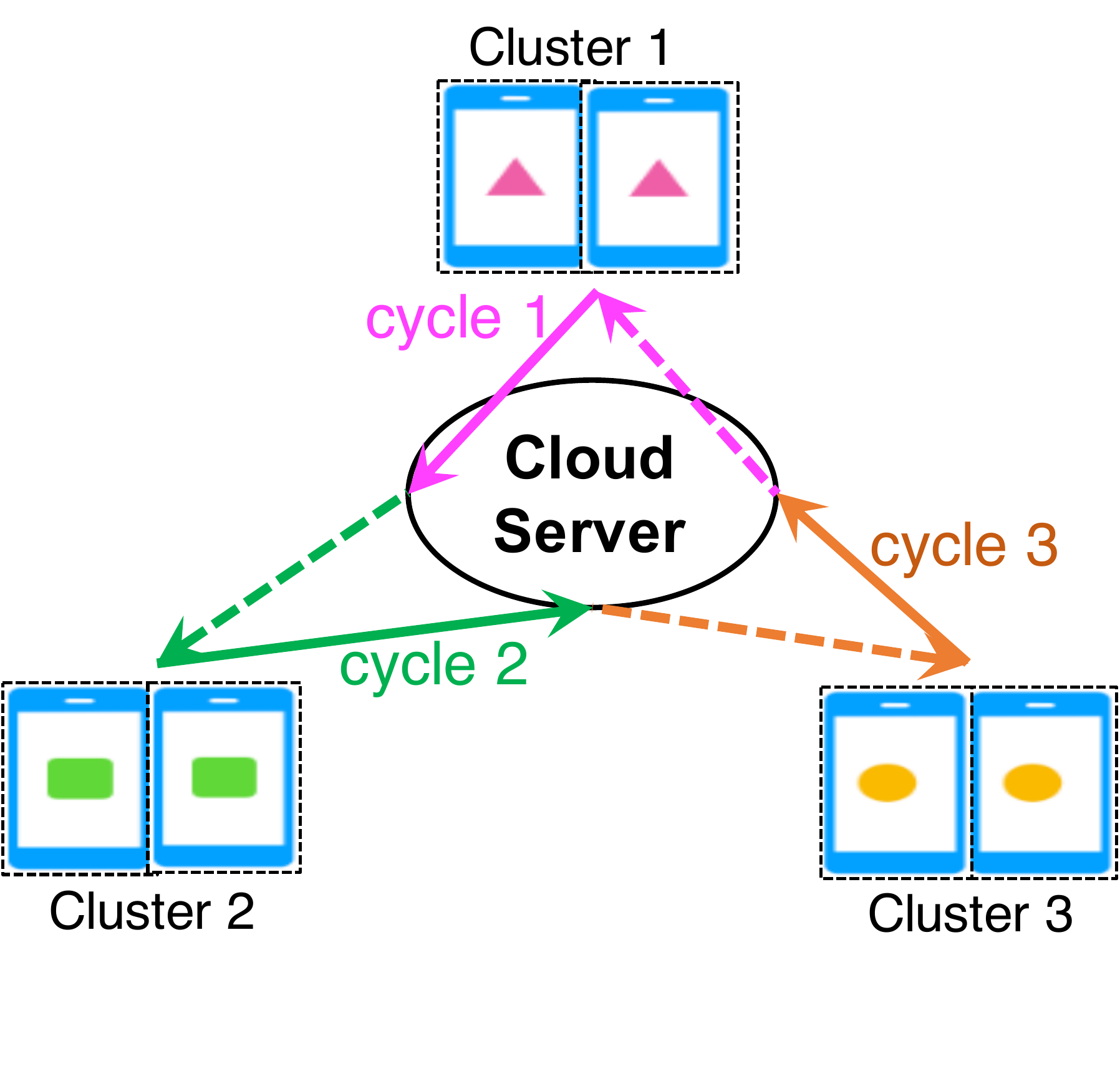}
	\end{minipage} 
	\vspace{-4mm}
	\caption{Left: Traditional federated learning system. Right: FedCluster system with cluster-cycling.}\label{fig: 1} 
	\vspace{-2mm}
\end{figure*}

To improve the optimization efficiency and flexibility of federated learning, we propose the FedCluster framework as illustrated in Fig. \ref{fig: 1} (Right). To elaborate, in the FedCluster system, devices are grouped into multiple clusters using a certain clustering approach (elaborated later). In each learning round, the system performs multiple cycles of federated learning through the clusters in a cyclical way. Specifically, as illustrated in Fig. \ref{fig: 1} (Right), in the first cycle of a learning round, a subset of devices of Cluster 1 are activated to perform federated learning, i.e., they download a model from the cloud server, perform local trainings using a certain algorithm (e.g., SGD) and upload the trained local models to the cloud for model averaging. Then, in the next cycle, a subset of devices of Cluster 2 are activated to perform federated learning. Following this strategy, all the clusters take turns to perform federated learning in a cyclic order. We note that the devices of FedCluster satisfy the resource constraint in \Cref{assum: resource}, as each device is only activated {at most} once per learning round. 

\textbf{Comparison:} The key differences between the FedCluster framework and the traditional federated learning framework are in two-fold. First, in each learning round, FedCluster updates the global model multiple times (equals to number of clusters), whereas the traditional federated learning updates it only once. Hence, FedCluster is expected to make more optimization progress per learning round. In fact, as the devices in federated learning are usually busy and unavailable, the clusters in FedCluster provide much flexibility to schedule the learning tasks for the devices and make frequent updates on the global model (see the discussion in the next paragraph.).
Second, the traditional federated learning process suffers from the device-level data heterogeneity. In comparison, as we show later in the analysis section, the convergence of the FedCluster learning process is affected by a smaller cluster-level data heterogeneity. 

\textbf{Generality and flexibility:} We further elaborate various aspects of generality and flexibility of the FedCluster framework.
\begin{itemize}[leftmargin=*,topsep=0pt, itemsep=0pt]
	\item {\em Generality:} The traditional federated learning framework in Fig. \ref{fig: 1} (Left) can be viewed as a special case of the FedCluster framework with only one cluster that includes all the devices. 	
	\item {\em Algorithm:} The clusters of FedCluster can implement any federated learning algorithms that are compatible with the traditional {federated learning} framework, e.g., FedAvg \cite{mcmahan2016communication}, FedProx \cite{li2018federated}, etc.	
	\item {\em Clustering: }  In FedCluster, the way to cluster the devices depends on the specific application scenario. Below we provide several representative clustering approaches. 
	\begin{enumerate}[leftmargin=*,topsep=0pt, itemsep=0pt]
		\item {Random uniform clustering:} The devices  are grouped into multiple clusters of equal size uniformly at random. In this case, the clusters are homogeneous and have similar data statistics. 
		\item {Timezone-based clustering:} In mobile networks where the devices are smart phones that are distributed over the world, one can group the devices based on either their timezones or GPS locations. This scenario fits the FedCluster system well because many smart phones are available only at a particular local time (e.g., midnight) to perform federated learning. In this case, the federated learning process cycles through the smart phones in different timezones.
		\item {Availability-based clustering:} A more general approach is to divide each learning round into multiple time slots. Each device determines its available time slot to perform federated learning. Then, the available devices within each time slot form a cluster.
	\end{enumerate} 
\end{itemize}

%We also note that the clusters of the FedCluster illustrated in \Cref{fig: 1} (Right) activate federated learning following a fixed cyclic order. Besides this scheme, one can also use other cyclic-type schemes, e.g., cyclic scheme with random reshuffle. %In the next section, we analyze the convergence guarantee for FedCluster under the cyclic  scheme with random reshuffle. 

%\vspace{-5pt}
\section{Convergence Analysis of FedCluster with Local SGD}
%\vspace{-5pt}
In this section, we analyze the convergence of FedCluster in smooth nonconvex optimization.
%Our analysis covers both strongly convex and nonconvex scenarios. 

%\vspace{-3pt}
\subsection{Problem Formulation and Algorithm}
%\vspace{-3pt}
We consider a federated learning system that consists of $n$ devices. Each device $k$ possesses a local data set $\mathcal{D}^{(k)}$ with $|\mathcal{D}^{(k)}|$ data samples. Denote $\mathcal{D}= \cup_{k=1}^{n} \mathcal{D}^{(k)}$ as the total dataset and denote $p_k = |\mathcal{D}^{(k)}|/|\mathcal{D}|$ as the proportion of data possessed by the device $k$. Then, we aim to solve the following empirical risk minimization problem
\begin{align}
	\min_{w\in \mathbb{R}^d} ~ f(w) = \sum_{k=1}^n p_k f(w; \mathcal{D}^{(k)}), \tag{P}
\end{align}
where $f(w; \mathcal{D}^{(k)})=\frac{1}{|\mathcal{D}^{(k)}|} \sum_{\xi \in \mathcal{D}^{(k)}}f(w; \xi)$ corresponds to the average loss on the local dataset.

In FedCluster, we group all the devices $\{1,2,...,n\}$ into $M$ clusters $\{\mathcal{S}_1,\mathcal{S}_2,..., \mathcal{S}_M\}$. The learning process of FedCluster with local SGD is presented in \Cref{algo: fedcluster}. To elaborate, in each learning round of FedCluster, we first {sample a subset of devices from each cluster to be activated.} 
Then, in the inner cycles of this learning round, the clusters sequentially perform federated learning using the FedAvg algorithm. Specifically in each cycle, the cloud server sends the current model to {the activated devices of} the cluster to initialize their local models. Then, {these} devices perform multiple local SGD iterations in parallel and send the trained local models to the cloud for model averaging. After that, the updated global model is sent to {the activated devices of} the cluster in the next cycle. 

\begin{algorithm}
	\caption{FedCluster with local SGD}\label{algo: fedcluster}
	\label{alg: 1}
	{\bf Input:} Initialization model $W_0 \in \mathbb{R}^d$, learning rate $\eta_{j,K,t}$.
	%\blue{the device indexes $\mathcal{S}_k$ and the quota $|\mathcal{S}_k'|$ for every $k$-th device,} \blue{cyclic order (fixed or random)}\\
	
	\For{\normalfont{rounds} $j=0,1, \ldots, T-1$}{
		%\eIf{\blue{random cyclic order is selected}}{
		%Reshuffle order of clusters \\
		%$\{1,\ldots,M\} \overset{\sigma}{\to} \{\sigma_j(1),\ldots,\sigma_j(M)\}$.}{\blue{Let %$\sigma_j(k)\equiv k, k=1,\ldots,M$.}}
	
		\For{\normalfont{cycles} $K=0,1,...,M-1$ }{
			Sample a subset of devices $S_{K+1}^{(j)}$ from cluster $\mathcal{S}_{K+1}$.
			%\blue{Randomly sample $\mathcal{S}_{j, \sigma_j(K+1)}'\subset \mathcal{S}_{\sigma_j(K+1)}$ with $|\mathcal{S}_{\sigma_j(K+1)}'|$ devices.}\\
			Cloud \text{sends} $W_{jM+K}$ to the sampled devices.\\
			\For{\normalfont{all devices} $k \in S_{K+1}^{(j)}$ \normalfont{\textbf{in parallel}}}{
				\text{Initialize} $w_{j,K,0}^{(k)}=W_{jM+K}$.\\
				\For{$t=0,\ldots,E-1$}{
					Sample a local data point $\xi_{j,K,t}^{(k)} \in \mathcal{D}^{(k)}$ uniformly at random.	Update\\ $w_{j,K,t+1}^{(k)}=w_{j,K,t}^{(k)}-\eta_{j,K,t} \nabla f(w_{j,K,t}^{(k)}; \xi_{j,K,t}^{(k)})$.
				}
				{Send the local model $w_{j,K,E}^{(k)}$ to the cloud.}
			}
%			All devices $k\in \mathcal{S}_{\sigma_j(K+1)}$ send local models $w_{j,K,E}^{(k)}$ to cloud. \\
			Cloud computes \\
			$W_{jM+K+1}= \sum_{k\in \mathcal{S}_{j, \sigma_j(K+1)}'} \frac{p_k}{q_{\sigma_j(K+1)}} w_{j,K,E}^{(k)}$.
		}	
%		\If{$j\mod J=0$}{
%			Set $W_{j+1}^{(K)}\gets \sum_{C=1}^{M} W_{j+1}^{(C)}$ for all $K=1,...,M$.
%		}
	}
	\textbf{Output:} $W_{TM}$%(Why not cluster averge $\overline{W_T}=\frac{1}{M} \sum_{K=1}^{M} w_{T}^{(K)}$?)
\end{algorithm}
%\vspace{-5pt}

We adopt the following standard assumptions on the loss function of the problem (P) \cite{li2019convergence}.
\begin{assum}\label{assum: P}
	The loss function in the problem (P) satisfies the following conditions.
	\begin{enumerate}[leftmargin=*, itemsep=0pt]
		\item For any data point $\xi$, function $f(\cdot; \xi)$ is $L$-smooth and bounded below. 
		%Consequently, this property also holds for the functions $f(\cdot; \mathcal{D}^{(k)})$, $f(\cdot; \mathcal{S}_{K})$ and $f$;
		\item For any $w\in \mathbb{R}^d$, the variance of stochastic gradients sampled by each device $k$ is bounded by $s_k^2$, i.e., 
		%for a data point $\xi$ uniformly sampled from the dataset,
		\begin{align*}
			\mathbb{E}_{\xi\sim \mathcal{D}^{(k)}} \|\nabla f(w;\xi) - \nabla f(w;\mathcal{D}^{(k)})\|^2 \le s_k^2.
		\end{align*}
		\item There exists $G>0$ such that for all $w_{j,K,t}^{(k)}$ and $\overline{w}_{j,K,t}$ (defined in \eqref{w_avg}) generated by \Cref{alg: 1} , it holds that
		$\mathbb{E}_{\xi\sim \mathcal{D}^{(k)}} \|\nabla f(w_{j,K,t}^{(k)};\xi)\|^2 \le G^2$ and $\mathbb{E}_{\xi\sim \mathcal{D}^{(k)}} \|\nabla f(\overline{w}_{j,K,t};\xi)\|^2 \le G^2$. %For any $w\in \mathbb{R}^d$, it holds that $\mathbb{E}_{\xi\sim \mathcal{D}^{(k)}} \|\nabla f(w;\xi)\|^2 \le G^2$ for a certain $G>0$. %Consequently, $f(w; \xi)$ is $G$-Lipschitz for any $\xi$. 
	\end{enumerate}
\end{assum}

{We note that the item 3 of \Cref{assum: P} directly implies that for any $j=0,1,\ldots,T-1$, $K, K'=0,1,\ldots,M-1$, $t=0,1,\ldots,E-1$}
\begin{gather}
\max\big\{\|\nabla f(\overline{w}_{j,K,t}; \mathcal{D}^{(\mathcal{S}_{K'})})\|,\|\nabla f(\overline{w}_{j,K,t})\|\big\}\le G \label{df_G}.
\end{gather}

%\input{thm_sc}

%\vspace{-3pt}
\subsection{Convergence {Result}}
%\vspace{-3pt}

In this subsection, we analyze the convergence rate of FedCluster with local SGD specified in \Cref{alg: 1} under {nonconvexity} of the loss function in the problem (P). In particular, we consider the simplified full participation setup, in which all the devices are activated in each learning round (i.e., $S_{K+1}^{(j)}=\mathcal{S}_{K+1}$). To simplified the analysis, we assume the clusters are chosen cyclically with reshuffle in each round to perform federated learning. 

Throughout the analysis, we define $q_K := \sum_{k\in \mathcal{S}_K} p_k$ and $f(w; \mathcal{D}^{(\mathcal{S}_K)}) := q_K^{-1} \sum_{k\in \mathcal{S}_K} p_k f(w; \mathcal{D}^{(k)})$, which characterizes the total loss of the cluster $\mathcal{S}_K$. 
Then, we adopt the following definition of cluster-level data heterogeneity that corresponds to the variance of the gradient on the local data possessed by the clusters. 
\begin{align}\label{H_cluster}
H_{\text{cluster}}:=\sup_{w\in \mathbb{R}^d} \Big( \mathbb{E} \sum_{K=1}^{M} q_K \|\nabla f(w; \mathcal{D}^{(\mathcal{S}_K)})-\nabla f(w)\|^2 \Big).
\end{align}

We obtain the following convergence result of FedCluster with local SGD in the nonconvex setting. Please refer to the appendix for the details of the proof.
\begin{thm}\label{thm: nc}
	Let \Cref{assum: P} hold and assume that $f(\cdot;\xi)$ is nonconvex for any data sample $\xi$. Choose learning rate $\eta_{j,K,t}\equiv(TME)^{-\frac{1}{2}}$ and choose $E,M,T$ such that $ME \le \frac{C}{8LG^2}$, $T\ge L^2\max(1,  \frac{16}{EM})$. Then, under full participation of the devices, the output of \Cref{alg: 1} satisfies
	\begin{align}\label{err_nc}
	\frac{1}{T}\sum_{j=0}^{T-1} \mathbb{E}\Big\| \nabla f(W_{jM}) \Big\|^2 \le& \frac{2C}{\sqrt{TME}},
	\end{align}
	where the constant $C$ is defined as 
	\begin{align}\label{C}
	C=&2\mathbb{E}\big(f(W_0)-\inf_{w\in \mathbb{R}^d} f(w)\big)\nonumber\\
	&+4L\Big(H_{\text{cluster}}+\sum_{K=1}^M q_K^{-1} \sum_{k\in \mathcal{S}_K} p_k^2 s_k^2\Big).
	\end{align}
	Furthermore, in order to achieve a solution that satisfies $\frac{1}{T}\sum_{j=0}^{T-1} \mathbb{E}\| \nabla f(W_{jM}) \|^2 \le \epsilon$, the required number of learning rounds satisfies $T \propto \frac{C^2}{ME}$. 
%	\begin{align}\label{Teps_nc}
%		T_{\epsilon}\propto LG^2 \Big( f(W_0)-\inf_{w\in \mathbb{R}^d} f(w)+L\big(H_{\text{cluster}}+\sum_{K=1}^M q_K^{-1} \sum_{k\in \mathcal{S}_K} p_k^2 s_k^2\big)  \Big).
%	\end{align}
\end{thm}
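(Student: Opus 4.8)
\noindent\emph{Proof proposal.} The plan is to derive a one-round descent inequality for the cloud models taken at the round boundaries, $W_0,W_M,W_{2M},\dots$, and then telescope it. The central object will be the data-weighted average of the local iterates inside a cycle, $\overline w_{j,K,t}=\sum_{k}\frac{p_k}{q_{\sigma_j(K+1)}}w_{j,K,t}^{(k)}$, as in \eqref{w_avg}; under full participation it satisfies $\overline w_{j,K,0}=W_{jM+K}$ and $\overline w_{j,K,E}=W_{jM+K+1}$, so these averages trace a single trajectory that passes through every $W_{jM+K}$ of round $j$, and I will treat the $M$ cycles of a round (each consisting of $E$ local SGD steps) together as one inexact gradient step on $f$.

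\noindent First I would write the usual one-local-step estimate from $L$-smoothness of $f$ applied to $\overline w_{j,K,t}$ and $\overline w_{j,K,t+1}=\overline w_{j,K,t}-\eta\,\widehat g_{j,K,t}$ with $\eta=(TME)^{-1/2}$ and $\widehat g_{j,K,t}=\sum_k\frac{p_k}{q}\nabla f(w_{j,K,t}^{(k)};\xi_{j,K,t}^{(k)})$:
\begin{equation*}
\mathbb{E} f(\overline w_{j,K,t+1})\le\mathbb{E} f(\overline w_{j,K,t})-\eta\,\mathbb{E}\big\langle\nabla f(\overline w_{j,K,t}),\overline g_{j,K,t}\big\rangle+\tfrac{L\eta^2}{2}\,\mathbb{E}\|\widehat g_{j,K,t}\|^2,
\end{equation*}
where $\overline g_{j,K,t}$ is the conditional mean of $\widehat g_{j,K,t}$. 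Splitting $\widehat g$ into mean plus zero-mean noise and using independence across the active devices together with the per-device variance bound $s_k^2$ (item~2 of \Cref{assum: P}) turns the last term into $\tfrac{L\eta^2}{2}(\|\overline g_{j,K,t}\|^2+\sum_k\frac{p_k^2}{q^2}s_k^2)$, whose noise part, accumulated over a round, produces the term $\sum_K q_K^{-1}\sum_{k\in\mathcal S_K}p_k^2 s_k^2$ of \eqref{C}. Then I would decompose $\overline g_{j,K,t}=\nabla f(\overline w_{j,K,t};\mathcal D^{(\mathcal S_{\sigma_j(K+1)})})+\big(\overline g_{j,K,t}-\nabla f(\overline w_{j,K,t};\mathcal D^{(\mathcal S_{\sigma_j(K+1)})})\big)$ and bound the parenthesized local-drift term by $L\max_k\|w_{j,K,t}^{(k)}-\overline w_{j,K,t}\|$; since one local step moves a model by at most $\eta G$ (item~3 of \Cref{assum: P}), this is $O(L\eta tG)$, and likewise $\|\overline w_{j,K,t}-W_{jM}\|\le ME\eta G$, so using $\|\nabla f\|\le G$ and $\|\nabla f(\cdot;\mathcal D^{(\mathcal S)})\|\le G$ from \eqref{df_G} I may replace $\overline w_{j,K,t}$ by the round anchor $W_{jM}$ inside the inner product at a cost of $O(LME\eta G^2)$ per step.

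\noindent Next I would sum over $t=0,\dots,E-1$ and over the $M$ cycles of round $j$. Because the reshuffled order $\sigma_j$ visits each cluster exactly once, the identity $\sum_K q_K\nabla f(w;\mathcal D^{(\mathcal S_K)})=\nabla f(w)$ combined with $-\langle a,b\rangle\le\tfrac12\|a-b\|^2-\tfrac12\|a\|^2$ collapses the leading inner-product term to $-\tfrac{\eta ME}{2}\|\nabla f(W_{jM})\|^2$ plus a residual controlled by $H_{\text{cluster}}$ of \eqref{H_cluster}, while the drift/anchoring errors sum to $O(LM^2E^2\eta^2G^2)$ per round; the hypothesis $ME\le\frac{C}{8LG^2}$ makes this last quantity at most half the descent term, so it is absorbed, leaving $\mathbb{E} f(W_{(j+1)M})\le\mathbb{E} f(W_{jM})-\frac{\eta ME}{4}\mathbb{E}\|\nabla f(W_{jM})\|^2+L\eta^2 ME\big(H_{\text{cluster}}+\sum_K q_K^{-1}\sum_{k\in\mathcal S_K}p_k^2 s_k^2\big)$. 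Telescoping over $j=0,\dots,T-1$, bounding $f$ below by $\inf_w f(w)$, dividing by $\frac{\eta MET}{4}$ and inserting $\eta=(TME)^{-1/2}$ then yields the right-hand side $\frac{2C}{\sqrt{TME}}$ with $C$ as in \eqref{C}; the side condition $T\ge L^2\max(1,16/(EM))$ is exactly what keeps $\eta$ small enough (e.g.\ $L\eta\le1$ and $L^2\eta^2 ME\le\frac1{16}$) for the smoothness descent and the constant bookkeeping to be valid, and $T\propto C^2/(ME)$ follows by setting $\frac{2C}{\sqrt{TME}}=\epsilon$.

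\noindent The step I expect to be the main obstacle is the round-level aggregation: one must merge the $M$ per-cycle inexact gradient steps into a single step on the \emph{global} objective $f$ --- possible only because a reshuffled round sweeps through every cluster exactly once, so the per-cluster gradients recombine into $\nabla f(W_{jM})$ --- while simultaneously keeping the accumulated smoothness error at the order $LM^2E^2\eta^2G^2$ so that the mild condition $ME\le C/(8LG^2)$ dominates it. It is precisely this aggregation that replaces FedAvg's device-level heterogeneity by the smaller $H_{\text{cluster}}$ and supplies the additional factor $\sqrt{M}$ in the convergence rate.
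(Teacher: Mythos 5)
Your proposal follows essentially the same route as the paper's proof: an $L$-smoothness descent step on the cycle-averaged iterate $\overline w_{j,K,t}$, a mean-plus-noise split of the stochastic gradient yielding the $\sum_{K} q_K^{-1}\sum_{k\in\mathcal S_K} p_k^2 s_k^2$ term, drift/anchoring bounds of order $\eta^2E^2M^2G^2$ from the bounded-gradient assumption (the paper's Lemma~2), recombination of the per-cluster gradients into $\nabla f(W_{jM})$ via the reshuffle, and telescoping under the stated conditions on $T$ and $ME$. The only loose point is the claim that the $O(LM^2E^2\eta^2G^2)$ drift error is ``absorbed'' into the (possibly vanishing) descent term; as in the paper, that term must instead be carried through the telescoping and bounded at the end by $C/\sqrt{TME}$ using $ME\le C/(8LG^2)$ --- exactly the condition you invoke --- so this is a presentational slip rather than a gap.
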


Therefore, under cluster-cycling, FedCluster with local SGD enjoys a convergence rate $\mathcal{O}(\frac{1}{\sqrt{TME}})$ in nonconvex optimization, which is faster than the convergence rate  $\mathcal{O}(\frac{1}{\sqrt{TE}})$ of the FedAvg algorithm \cite{li2019communication}. Also, the above convergence rate of FedCluster depends on the cluster-level data heterogeneity $H_{\text{cluster}}$, whereas the convergence rate of the FedAvg algorithm depends on the larger device-level data heterogeneity $H_{\text{device}}:= \sup_{w\in \mathbb{R}^d} ( \mathbb{E}\sum_{k=1}^{n} p_k \|\nabla f(w; \mathcal{D}^{(k)})-\nabla f(w)\|^2 )$ (It is easy to show that $H_{\text{cluster}}\le H_{\text{device}}$). {The complexity result \eqref{err_nc} also implies a trade-off between the number of clusters $M$ and the constant $C$. Specifically, with more clusters (i.e., a larger $M$), it can be shown that both the cluster-level data heterogeneity $H_{\text{cluster}}$ and the local variance $\sum_{K=1}^M q_K^{-1} \sum_{k\in \mathcal{S}_K} p_k^2 s_k^2$ in $C$ increase. Hence, a proper choice of $M$ that minimizes $\frac{C}{\sqrt{M}}$ can yield the fastest convergence rate in nonconvex case. On the other hand, given a fixed number of clusters $M$, a proper clustering approach that minimizes the cluster-level data heterogeneity $H_{\text{cluster}}$ can also improve the convergence rate.} We further explore the impact of these factors on the convergence speed of FedCluster in the following experiment section.

%\input{lemma_sc}
%\input{proof_sc}
%\input{lemma_nc}
%\input{proof_nc}
%\vspace{-5pt}
\section{Experiments}\label{exp}
%\vspace{-5pt}
\subsection{Experiment Setup}
In this section, we compare the performance of FedCluster with that of the traditional federated learning in deep learning applications. We consider completing a standard classification tasks on two datasets -- CIFAR-10 \cite{Krizhevsky09} and MNIST \cite{Lecun_1998}, using the {AlexNet} model \cite{Zhang_2017} and the cross-entropy loss. 
%We simulate a fixed number of 100 devices for both FedCluster and traditional federated learning and each device possesses 500 data samples.
{We simulate 1000 devices for both FedCluster and the traditional federated learning system, and each device possesses 500 data samples. }
Specifically, the dataset of each device is specified by a major class and a device-level data heterogeneity ratio $\rho_{\text{device}}\in[0.1, 1]$. Take the CIFAR-10 dataset as an example, each of its ten classes is assigned as the major class of 100 devices. Then, $\rho_{\text{device}}\times 100\%$ of the samples of each device are sampled from the major class, and $(1-\rho_{\text{device}})/9\times 100\%$ of the samples are sampled from each of the other classes. Hence, a larger $\rho_{\text{device}}$ corresponds to a higher device-level data heterogeneity. For FedCluster, by default we cluster the devices into 10 clusters uniformly at random.

%By default, all the devices run $E=20$ local SGD steps with batch size 30 using a constant learning rate $\eta_t\equiv 0.01$. 
For the traditional federated learning system, in every learning round we randomly activate 10\% of the devices to participate in the training. By default, these activated devices run $E=20$ local SGD steps with batch size 30 using fine-tuned learning rates $\eta_t\equiv 0.1$ and $0.05$ for CIFAR-10 and MNIST, respectively. For the FedCluster system, in every learning cycle we randomly activate 10\% of the devices of the corresponding cluster to participate in the training. These activated devices run $E=20$ local SGD steps with batch size 30 using the learning rates $\eta_t\equiv 0.01$ and $0.005$ for CIFAR-10 and MNIST, respectively. In particular, to make a fair comparison, these choices of learning rates are one tenth of those adopted by the traditional federated learning system, as in each cycle only one of the ten clusters is involved (hence less number of data samples are used). We note that the learning rates adopted by FedCluster are not fine-tuned. We also implement a centralized SGD as a baseline, which adopts 1000 iterations per learning round, batch size 60 and fine-tuned learning rates $0.01$ and $0.005$ for CIFAR-10 and MNIST, respectively.
This ensures that the federated learning algorithms and the centralized SGD consume the same number of samples (i.e., 60k training samples) per learning round. All experiments with a given model and dataset are initialized at the same point. 

%\blue{In the traditional federated learning, and in each cluster for FedCluster, 10\% randomly selected devices will participate in each learning round. The FedCluster use fine-tuned constant learning rate $\eta_t\equiv 0.01$ and $0.005$ for CIFAR-10 and MNIST respectively, while the traditional federated learning use $\eta_t\equiv 0.1$ and $0.05$ for CIFAR-10 and MNIST respectively, which are $1/10$ of those of the traditional federated learning since the batch size for each local update of FedClsuter is also $1/10$ of that of the traditional federated learning. \sout{Obviously, the learning rate of the traditional federated learning is 10 times that of FedCluster, which can be regarded as compensation for the traditional federated learning's fewer updates of global model parameters. We fully fine-tune the learning rate for the traditional federated learning but not for that of FedCluster.} In this setting where the learning rate of only FedAvg is fine-tuned, FedCluster still outperforms the traditional federated learning models, so we expect that FedClsuter would outperform more if it also used a fine-tuned learning rate.}
%As a baseline, we also implement a centralized SGD for 1000 iterations per learning round with batch size 60 and learning rate $0.01$.
%This section only presents the training loss comparison results, and the corresponding {training} accuracy comparison results are presented in Appendix \ref{appendix_E}.

%\vspace{-4pt}
%\subsection{Comparison under Heterogeneous Data}\label{exp_1}
{\subsection{Comparison under Different Device-level Data Heterogeneities}\label{exp_1}}
%\vspace{-3pt}
%We compare FedCluster (with local SGD) with the conventional FedAvg algorithm under different levels of data heterogeneity ratio, i.e., all the combinations of $\rho_{\text{cluster}}=0.1, 0.5$ and $\rho_{\text{device}}=0.1, 0.4, 0.7, 0.9$. Due to space limitation, we only present the training results on CIFAR-10 with $\rho_{\text{device}}=0.1, 0.9$ in \Cref{fig: 2}. From the first two figures of \Cref{fig: 2} ($\rho_{\text{cluster}}=0.1$, i.e., all the clusters have homogeneous devices), it can be seen that FedCluster achieves a faster convergence than FedAvg under different levels of $\rho_{\text{device}}$. In particular, FedCluster achieves the fastest convergence and significantly outperforms FedAvg when $\rho_{\text{device}}=0.1$ (i.e., all devices have homogeneous data), and the convergence speed of FedCluster is close to that of the centralized SGD baseline. When $\rho_{\text{device}}=0.9$ and thus the device data\blue{set} is highly heterogeneous, FedCluster still outperforms FedAvg. Moreover, as the $\rho_{\text{cluster}}$ increases to 0.5 in the last two figures of \Cref{fig: 2}, FedCluster still achieves a significantly faster convergence than FedAvg under different levels of $\rho_{\text{device}}$. %The other results with other data heterogeneity values in Appendix \ref{appendix_D} also demonstrate the effectiveness of FedCluster.
We first compare FedCluster (with local SGD) with the conventional FedAvg under different levels of device-level data heterogeneity ratios, i.e., $\rho_{\text{device}}=0.1, 0.4, 0.7, 0.9$. %$\rho_{\text{cluster}}=0.1$ is fixed for FedCluster (i.e., all the clusters have homogeneous devices). 
We present the train loss and test accuracy results on CIFAR-10 in Fig. \ref{fig: 2}, where the top row presents the results of $\rho_{\text{device}}=0.1, 0.4$ and the bottom row presents the results of $\rho_{\text{device}}=0.7, 0.9$. It can be seen that FedCluster achieves faster convergence than FedAvg in terms of both train loss and test accuracy under different levels of $\rho_{\text{device}}$, which demonstrates the advantage of FedCluster in both training efficiency and generalization ability with heterogeneous data. 

%\sout{In particular, FedCluster achieves the fastest convergence and significantly outperforms FedAvg when $\rho_{\text{device}}=0.1$ (i.e., all devices have homogeneous data). When $\rho_{\text{device}}=0.9$ and thus the device dataset is highly heterogeneous, FedCluster still outperforms FedAvg. Moreover, the second row (test accuracy) of Fig. \ref{fig: 2} implies the similar conclusion and demonstrates the advantage of FedCluster also in generalization ability.}
%\begin{figure}[htbp]%[bth!]
%	\vspace{-2mm}
%	\centering
%	\includegraphics[width=0.24\textwidth,height=0.2\textwidth]{_Figures/1_rou_device/cifar10_alexnet_14_train_loss.eps}	
%	\includegraphics[width=0.24\textwidth,height=0.2\textwidth]{_Figures/1_rou_device/cifar10_alexnet_14_test_acc.eps}	
%	\includegraphics[width=0.24\textwidth,height=0.2\textwidth]{_Figures/1_rou_device/cifar10_alexnet_79_train_loss.eps}
%	\includegraphics[width=0.24\textwidth,height=0.2\textwidth]{_Figures/1_rou_device/cifar10_alexnet_79_test_acc.eps}
%	\vspace{-6mm}
%	\caption{\small{Comparison between FedCluster and FedAvg under different device-level data heterogeneities on CIFAR-10.}}\label{fig: 2}
%	\vspace{-2mm}
%\end{figure}
\begin{figure}[htbp]%[bth!]
	\vspace{-2mm}
	\centering
	\includegraphics[width=0.24\textwidth,height=0.2\textwidth]{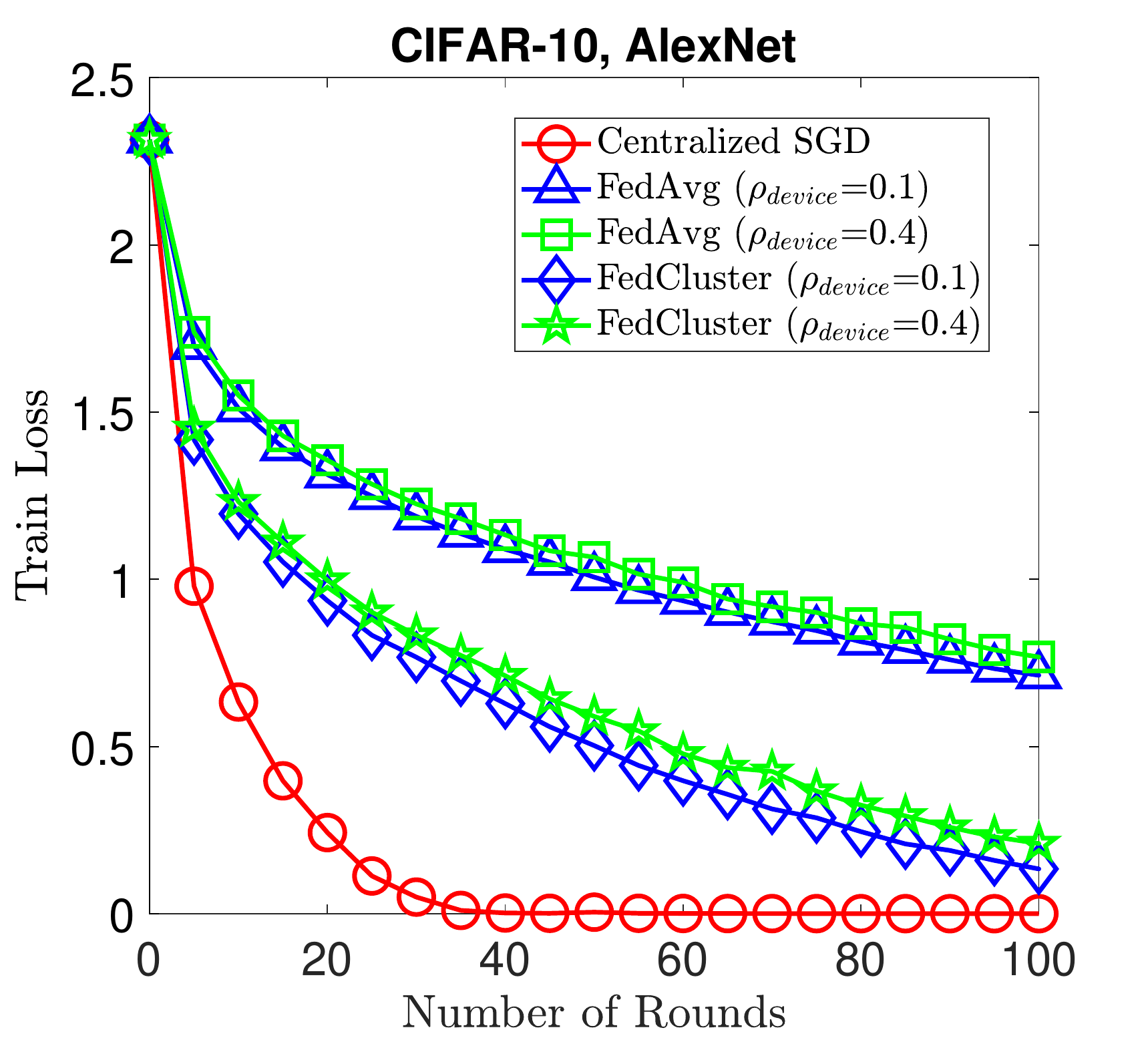}	
	\includegraphics[width=0.24\textwidth,height=0.2\textwidth]{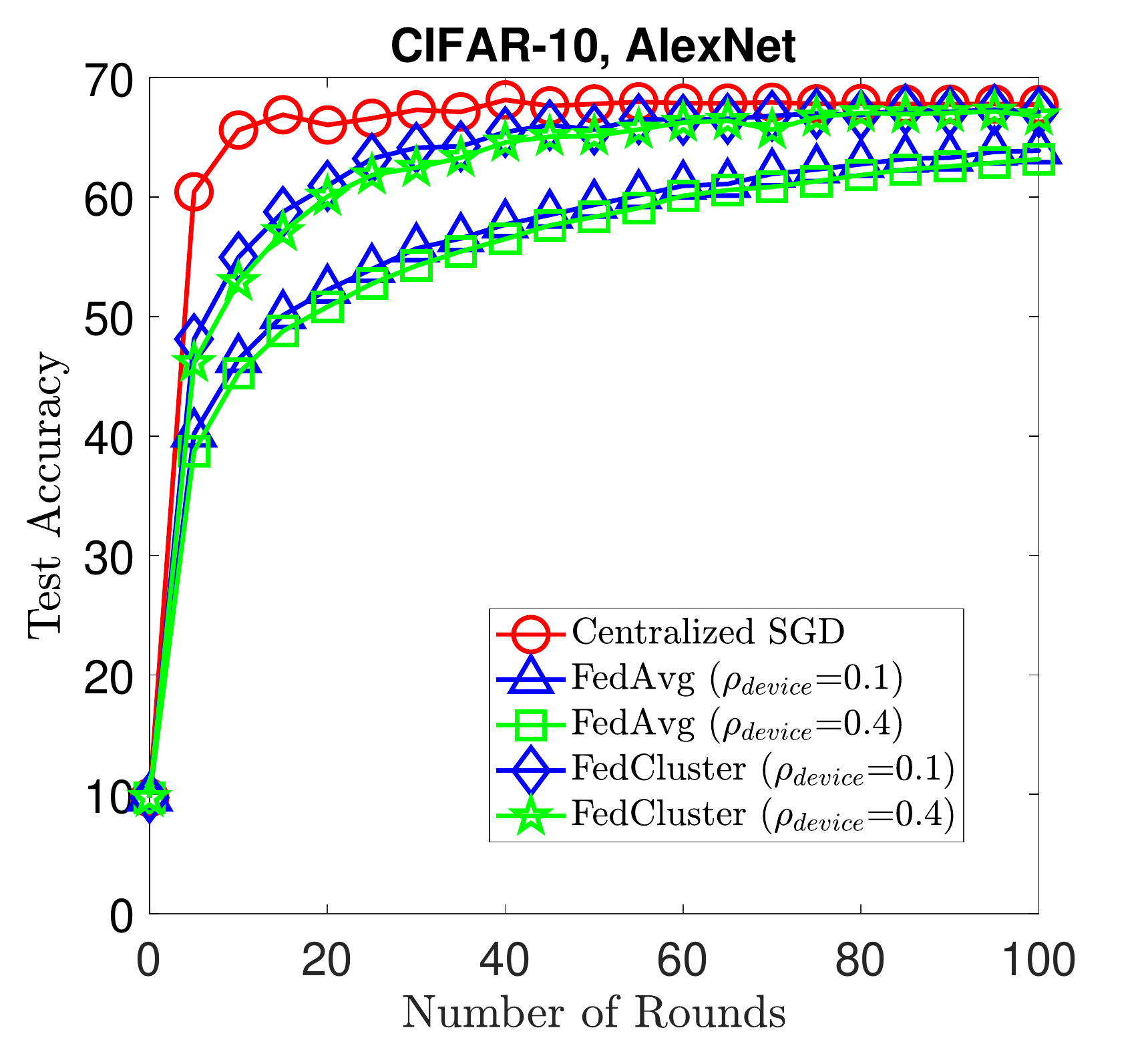}	
	\includegraphics[width=0.24\textwidth,height=0.2\textwidth]{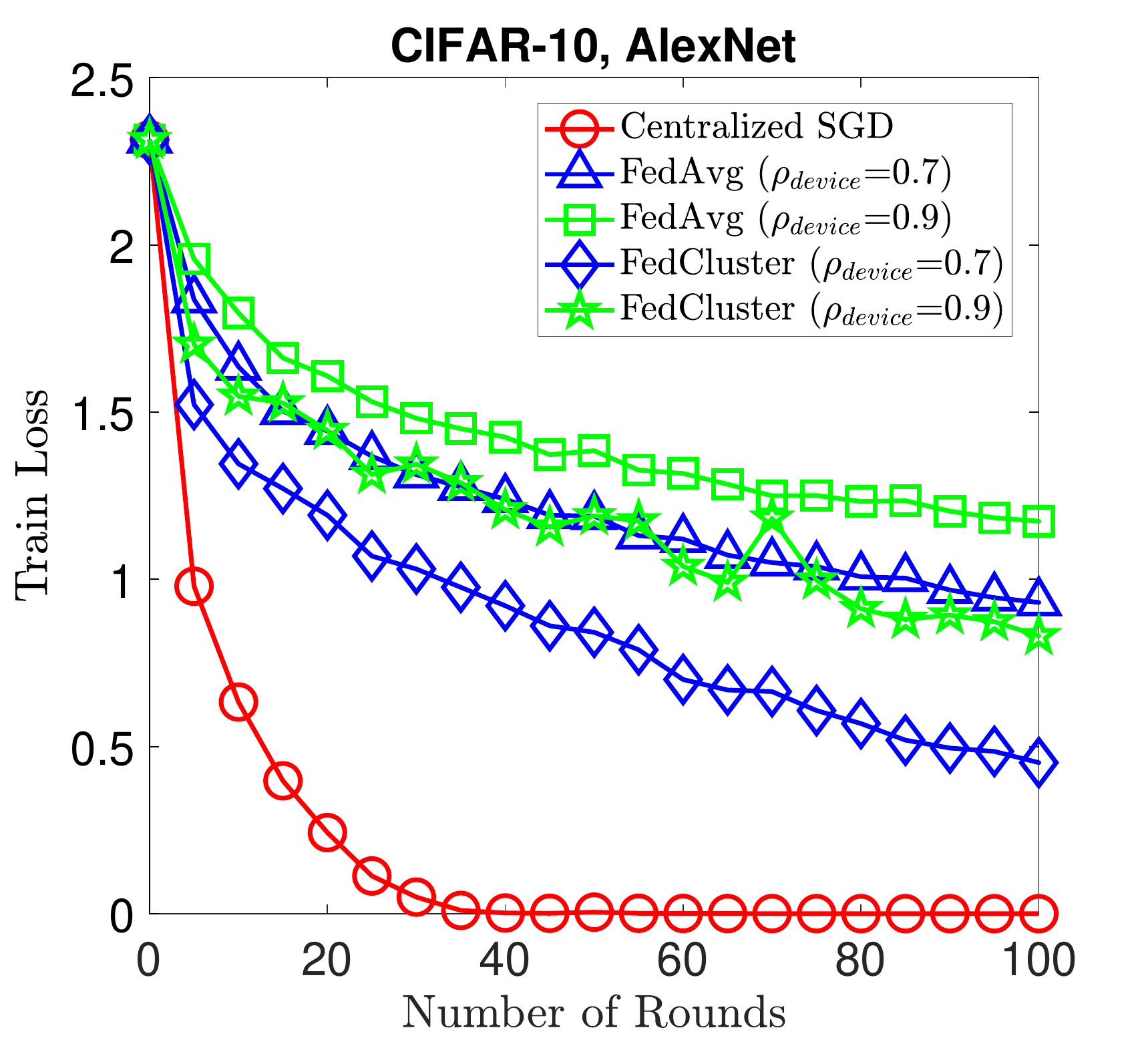}
	\includegraphics[width=0.24\textwidth,height=0.2\textwidth]{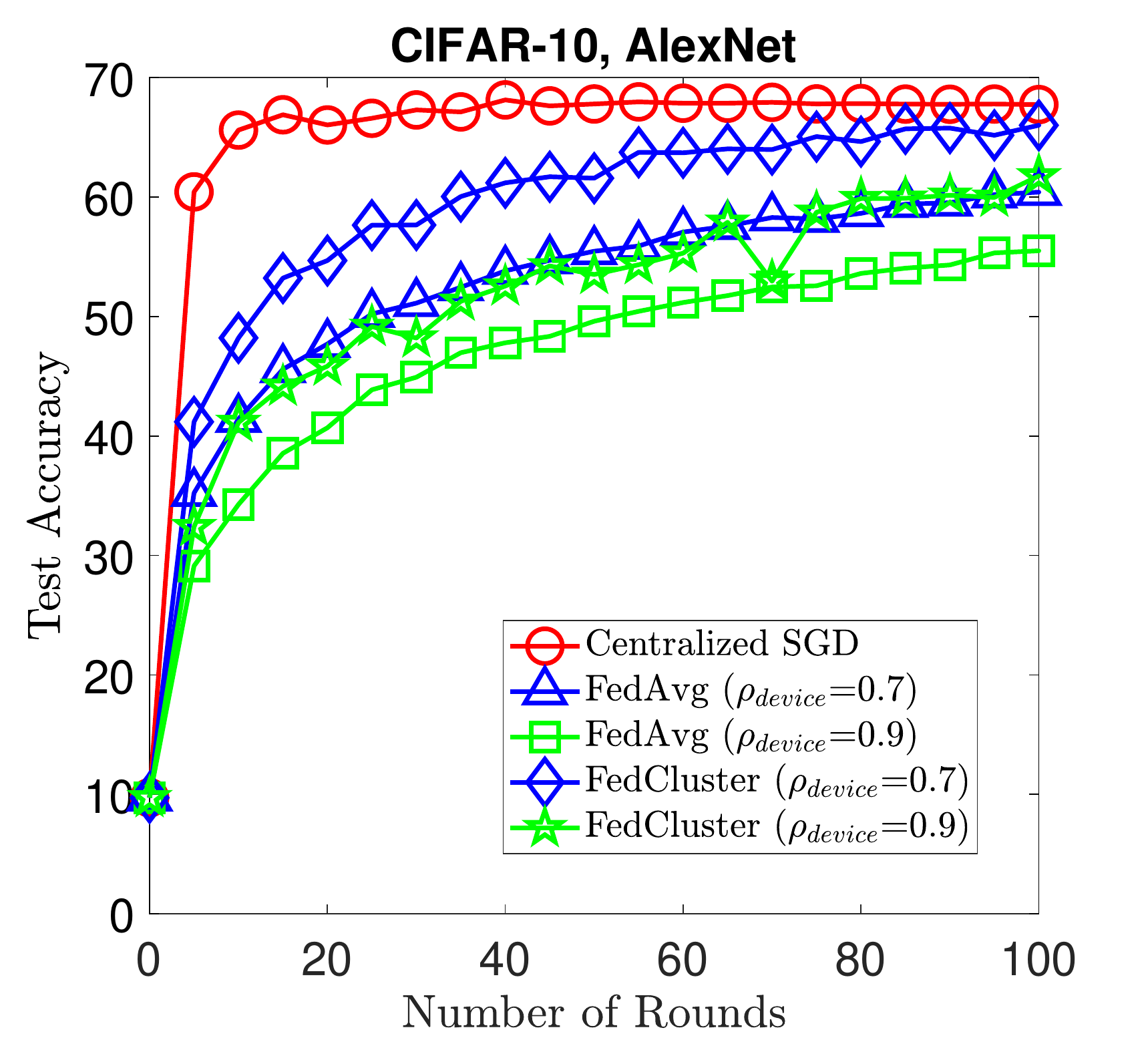}
	\vspace{-6mm}
	\caption{\small{Comparison between FedCluster and FedAvg under different device-level data heterogeneities on CIFAR-10.}}\label{fig: 2}
	\vspace{-2mm}
\end{figure}
In the following Fig. \ref{fig: 22}, we present the train loss and test accuracy results on MNIST, and one can make similar observations as above. In particular, by comparing Fig. \ref{fig: 2} with Fig. \ref{fig: 22}, it seems that FedCluster is more advantageous when the data is more complex. 

%\begin{figure}[htbp]%[bth!]
%	\vspace{-2mm}
%	\centering
%	\includegraphics[width=0.24\textwidth,height=0.2\textwidth]{_Figures/1_rou_device/mnist_alexnet_14_train_loss.eps}
%	\includegraphics[width=0.24\textwidth,height=0.2\textwidth]{_Figures/1_rou_device/mnist_alexnet_14_test_acc.eps}
%	\includegraphics[width=0.24\textwidth,height=0.2\textwidth]{_Figures/1_rou_device/mnist_alexnet_79_train_loss.eps}
%	\includegraphics[width=0.24\textwidth,height=0.2\textwidth]{_Figures/1_rou_device/mnist_alexnet_79_test_acc.eps}
%	\vspace{-6mm}
%	\caption{\small{Comparison between FedCluster and FedAvg under different device-level data heterogeneities on MNIST.}}\label{fig: 22}
%	\vspace{-2mm}
%\end{figure}
\begin{figure}[htbp]%[bth!]
	\vspace{-2mm}
	\centering
		\includegraphics[width=0.24\textwidth,height=0.2\textwidth]{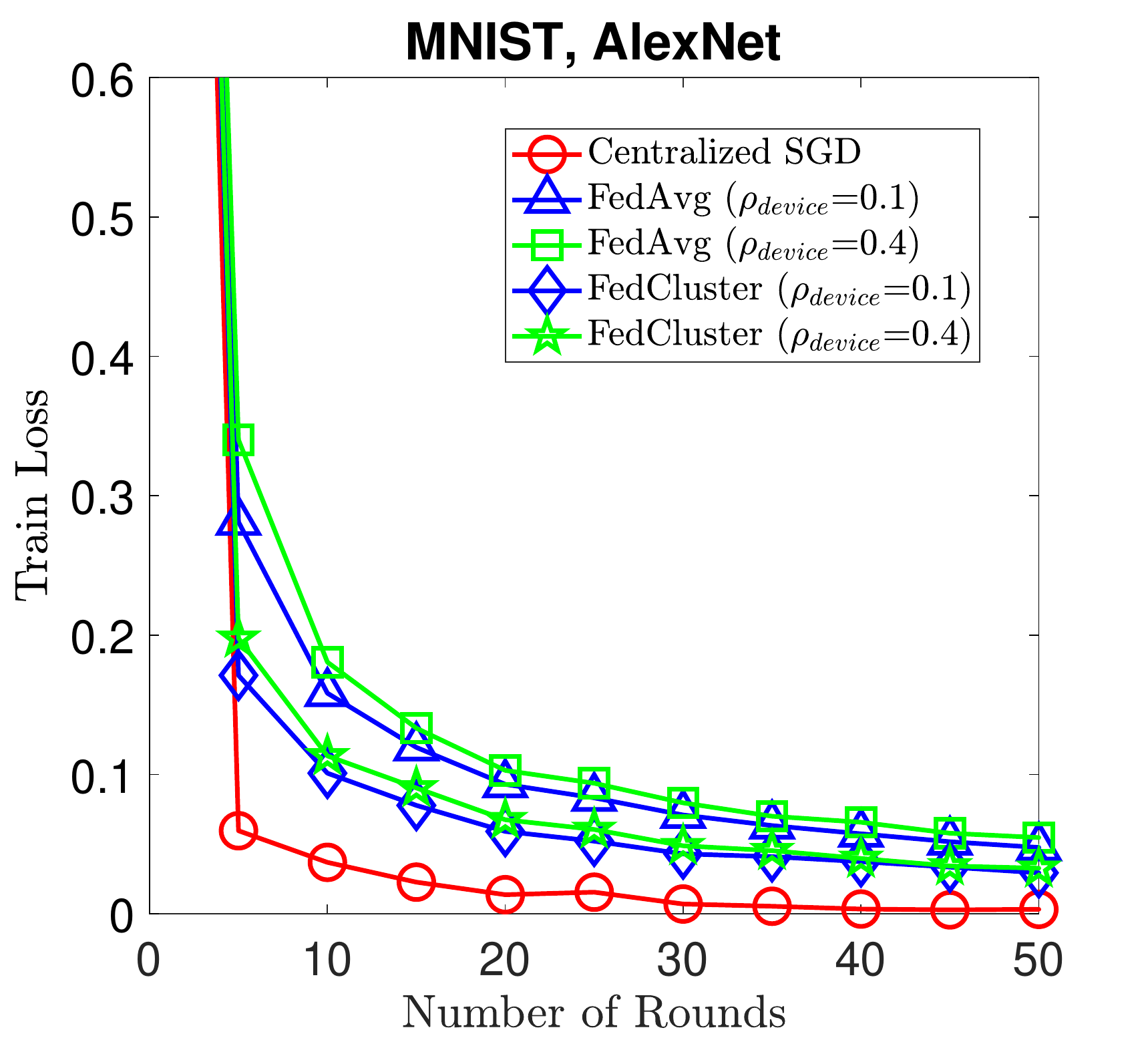}
		\includegraphics[width=0.24\textwidth,height=0.2\textwidth]{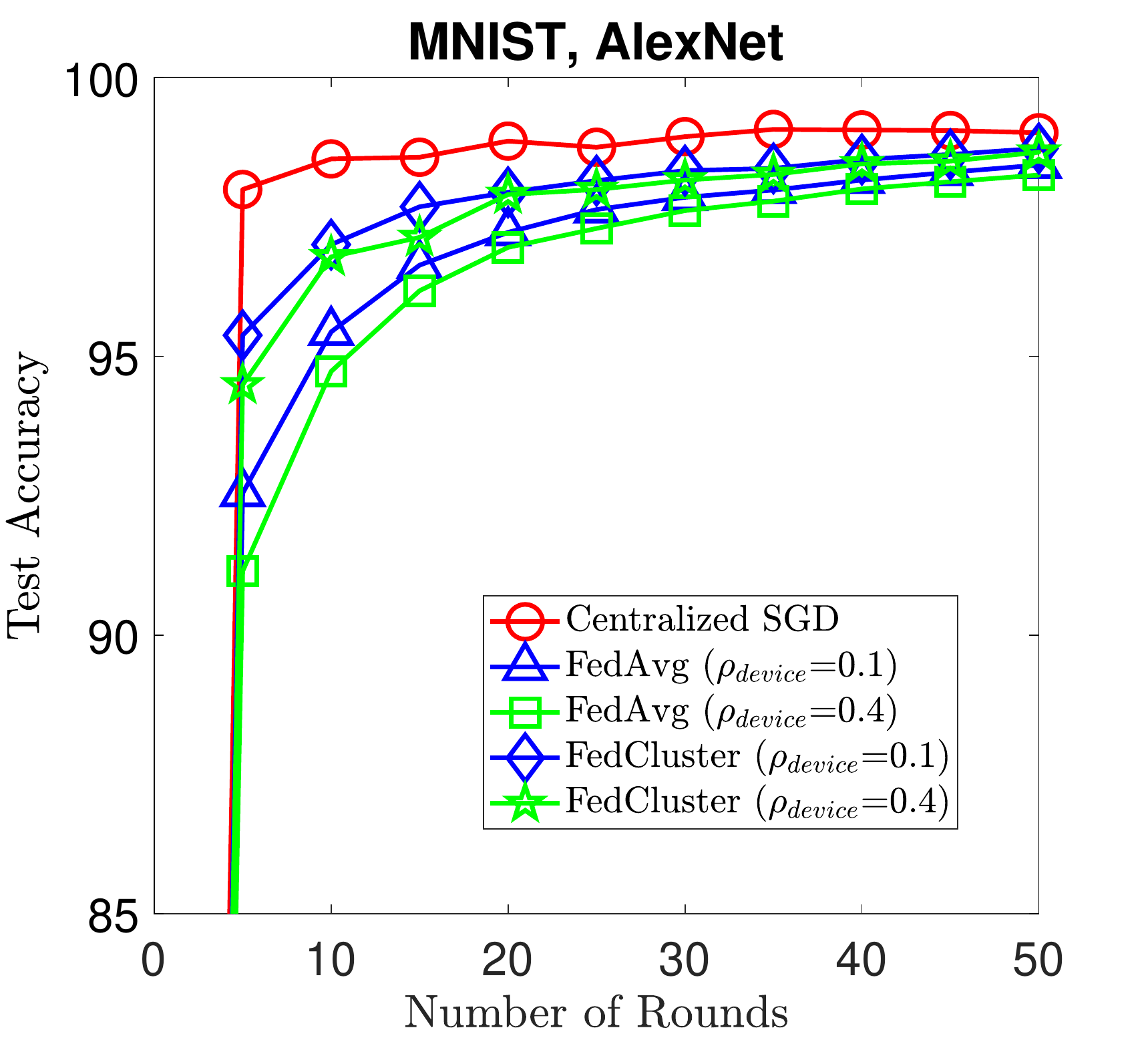}
		\includegraphics[width=0.24\textwidth,height=0.2\textwidth]{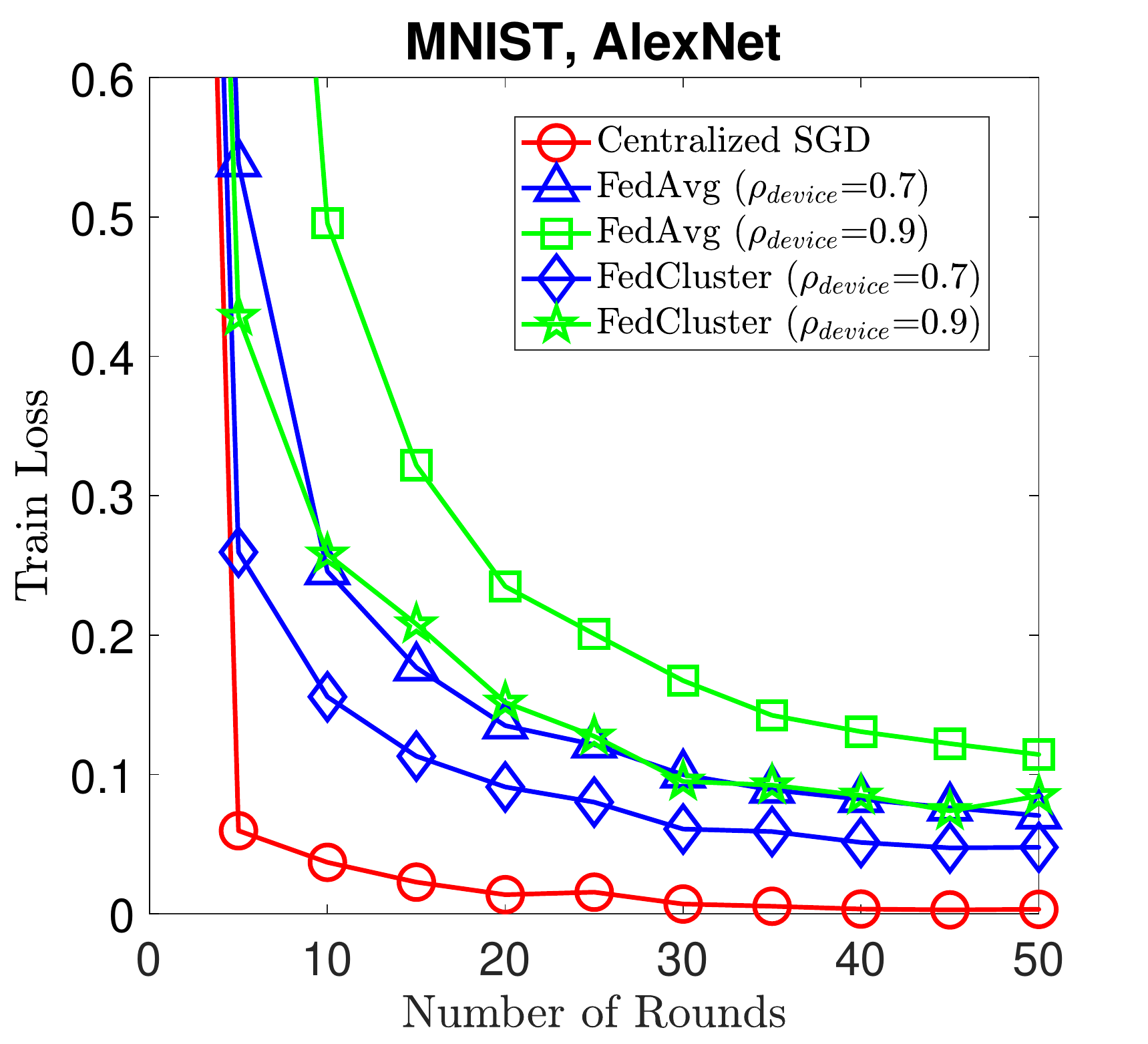}
		\includegraphics[width=0.24\textwidth,height=0.2\textwidth]{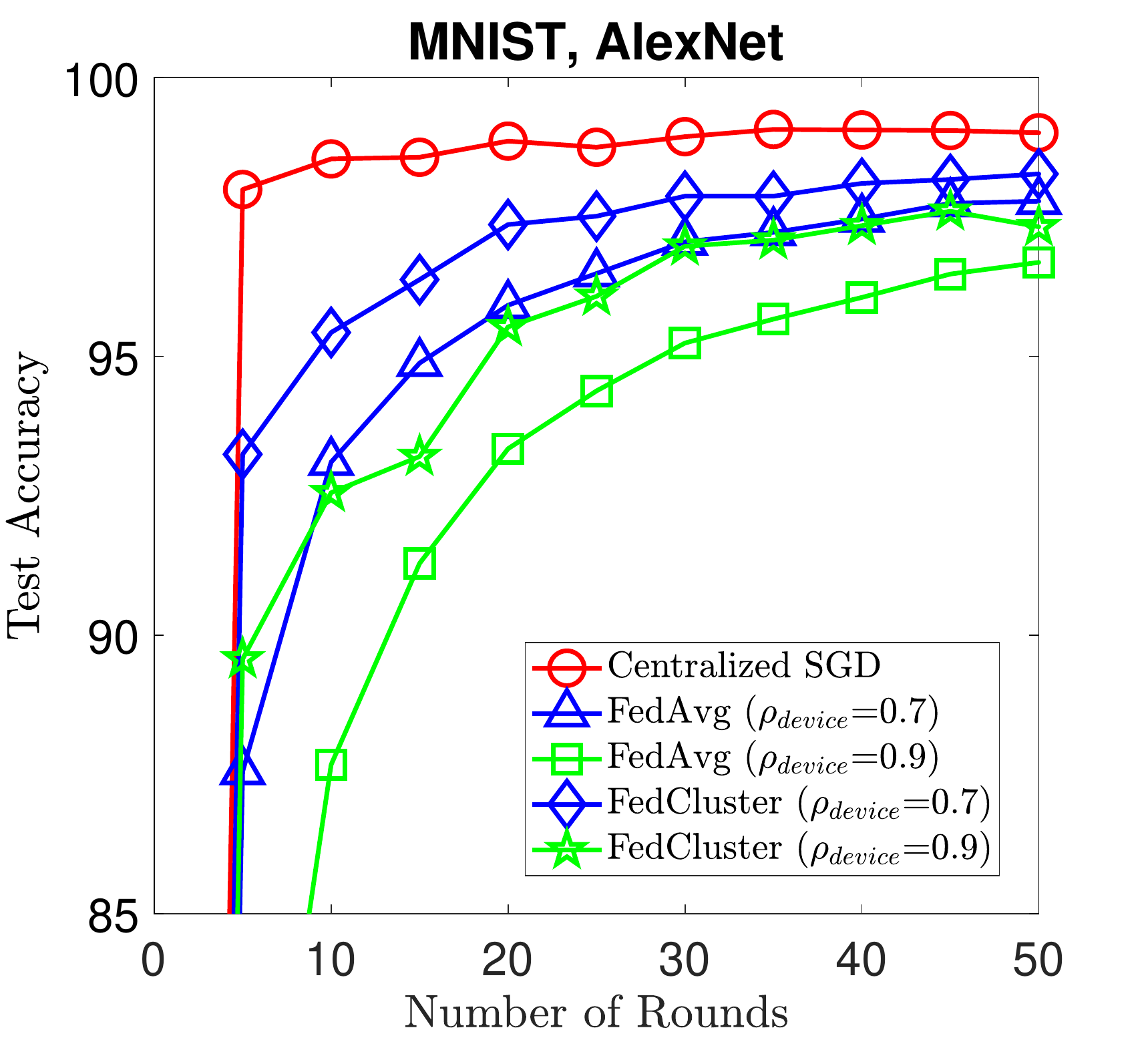}
	\vspace{-6mm}
	\caption{\small{Comparison between FedCluster and FedAvg under different device-level data heterogeneities on MNIST.}}\label{fig: 22}
	\vspace{-2mm}
\end{figure}

\vspace{-3pt}
\subsection{Comparison under Different Optimizers}\label{exp_2}
%\vspace{-3pt}
We further compare FedCluster and the traditional federated learning with different choices of local optimizers, including 1) SGD-momentum (SGDm) with $m$ = 0.5; 2) Adam with $(\beta_1, \beta_2)$ = (0.9, 0.999), $\epsilon$ = ${10^{-8}}$; and 3) FedProx with $\mu$ = 0.1. 
%We use learning rate 0.0001 for Adam and 0.01 for the other algorithms. We fix $\rho_{\text{cluster}} = 0.1$ and set $\rho_{\text{device}} = 0.1, 0.5$. \Cref{fig: 3} presents the AlexNet training results on CIFAR-10 with different local optimizers. It can be seen that FedCluster significantly outperforms the traditional federated learning under all choices of local optimizers and all levels of data heterogeneities. %All the other training results (e.g., Inception trained on CIFAR-10 and MNIST) are presented in Figures \ref{fig: 9} - \ref{fig: 11} in Appendix \ref{appendix_D} and they imply the same conclusion. 
{The learning rate for Adam is chosen to be small enough to ensure convergence, while the other optimizers use the default learning rate. We set $\rho_{\text{device}} = 0.1, 0.5$. Fig. \ref{fig: 3} presents the training results with different local optimizers, where the first column shows the results on CIFAR-10 and the second column shows the results on MNIST. The testing results are similar to the training results and hence are omitted. It can be seen that FedCluster significantly outperforms the traditional federated learning under all choices of local optimizers and all levels of device-level data heterogeneity.} Again, FedCluster seems to be more advantageous when dealing with more complex datasets.

%\begin{figure}[htbp]%[bth]
%	\centering	
%	\vspace{-2mm}
%	\includegraphics[width=0.24\textwidth,height=0.2\textwidth]{_Figures/2_opti_algorithms/cifar10_alexnet_sgdm_train_loss.eps}
%	\includegraphics[width=0.24\textwidth,height=0.2\textwidth]{_Figures/2_opti_algorithms/mnist_alexnet_sgdm_train_loss.eps}
%	\includegraphics[width=0.24\textwidth,height=0.2\textwidth]{_Figures/2_opti_algorithms/cifar10_alexnet_adam_train_loss.eps}
%	\includegraphics[width=0.24\textwidth,height=0.2\textwidth]{_Figures/2_opti_algorithms/mnist_alexnet_adam_train_loss.eps}
%	\includegraphics[width=0.24\textwidth,height=0.2\textwidth]{_Figures/2_opti_algorithms/cifar10_alexnet_prox_train_loss.eps}
%	\includegraphics[width=0.24\textwidth,height=0.2\textwidth]{_Figures/2_opti_algorithms/mnist_alexnet_prox_train_loss.eps}
%	\vspace{-6mm}
%	\caption{\small{Comparison between FedCluster and FedAvg under different local optimizers on CIFAR-10 (left) and MNIST (right).}}\label{fig: 3}
%\end{figure}
\begin{figure}[htbp]%[bth]
	\centering	
	\vspace{-2mm}
	\includegraphics[width=0.24\textwidth,height=0.2\textwidth]{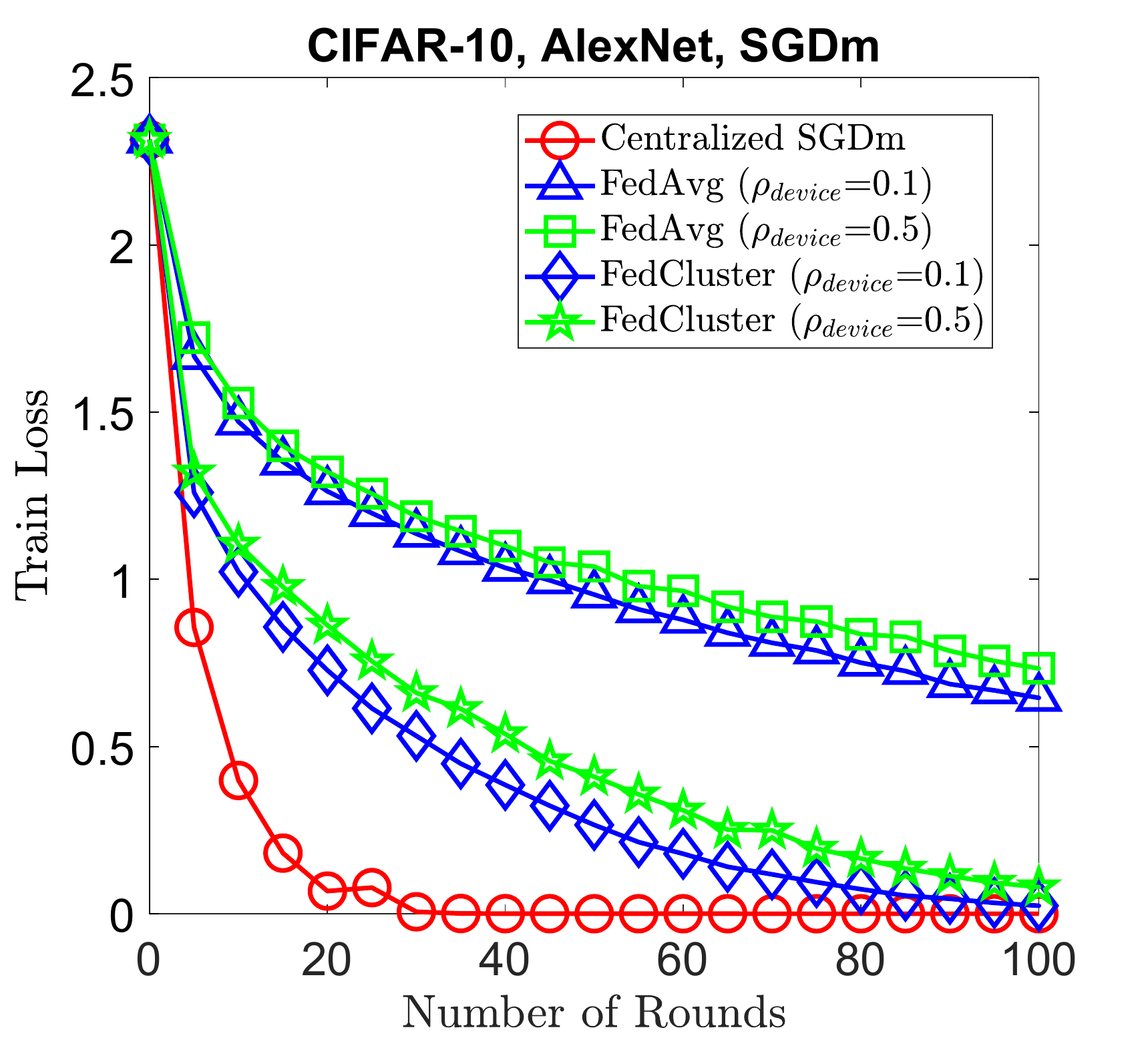}
	\includegraphics[width=0.24\textwidth,height=0.2\textwidth]{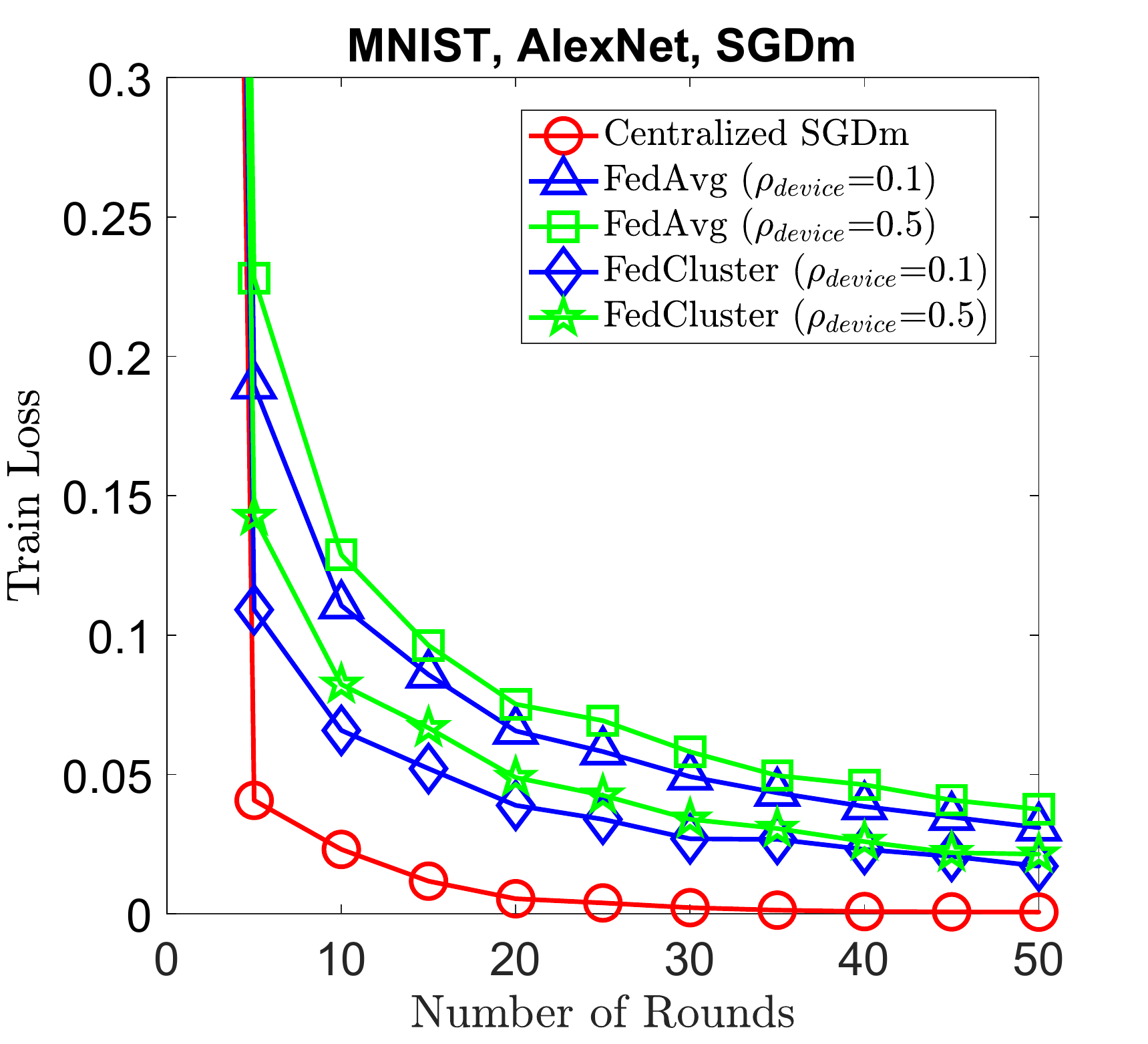}
	\includegraphics[width=0.24\textwidth,height=0.2\textwidth]{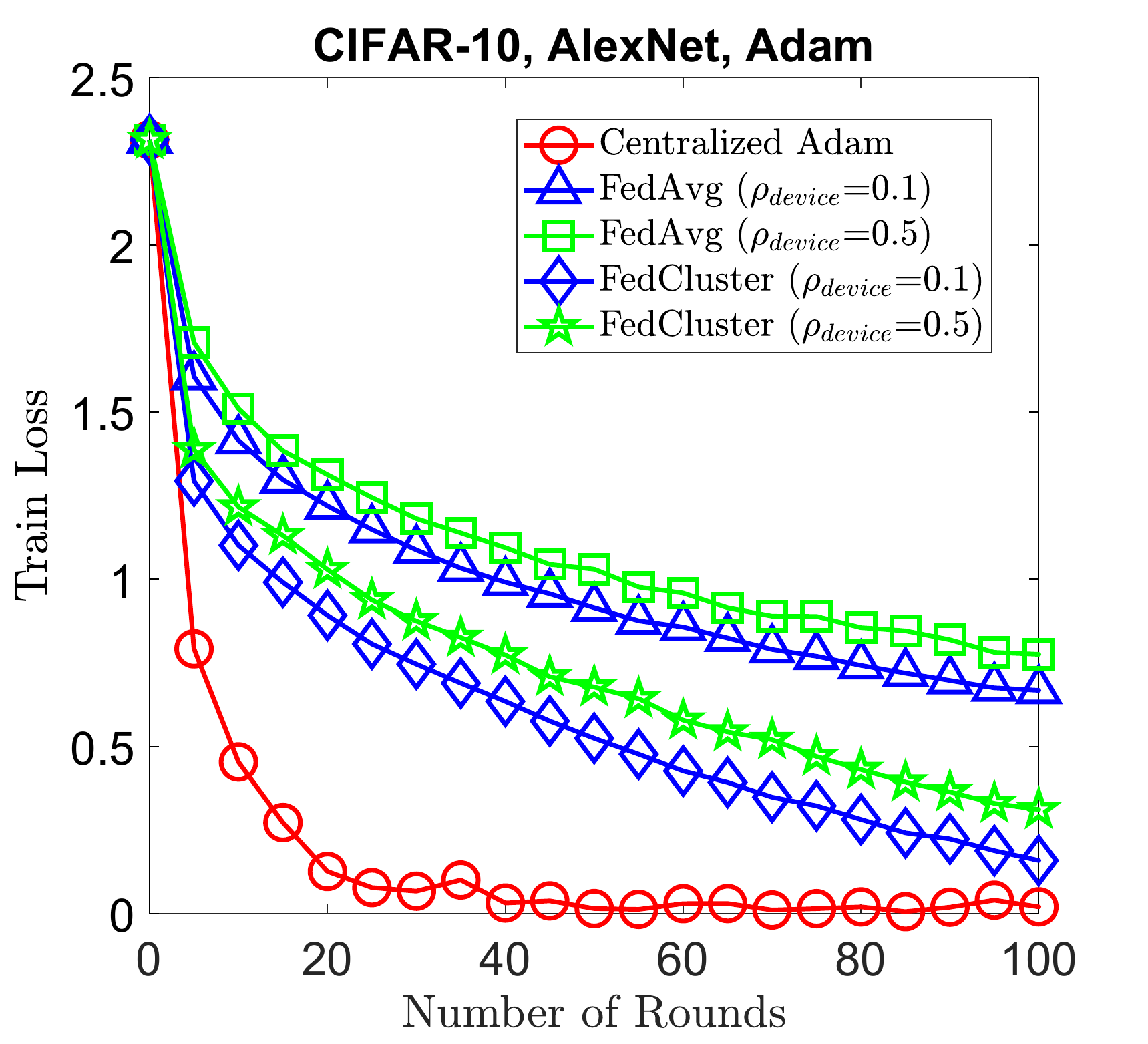}
	\includegraphics[width=0.24\textwidth,height=0.2\textwidth]{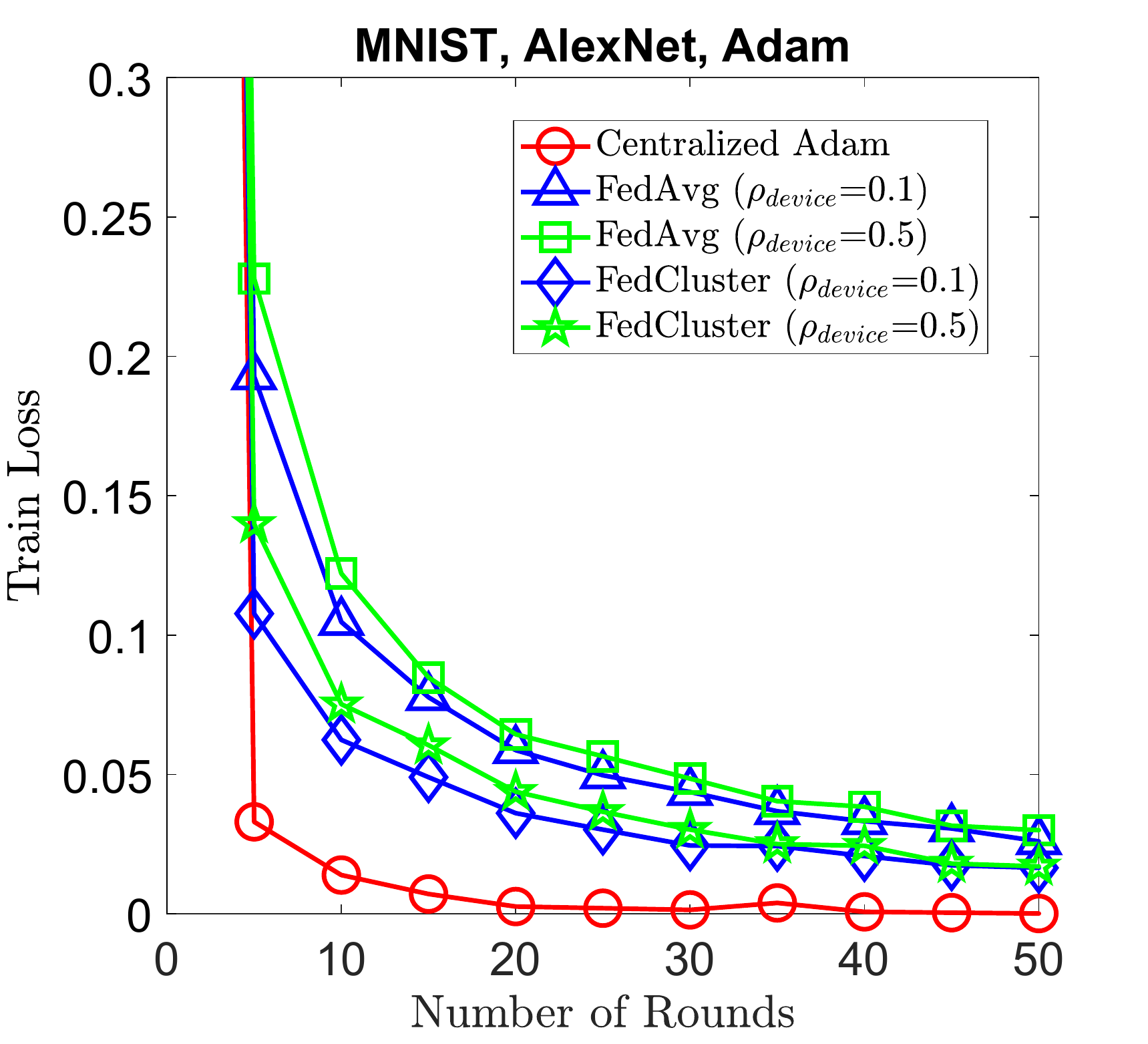}
	\includegraphics[width=0.24\textwidth,height=0.2\textwidth]{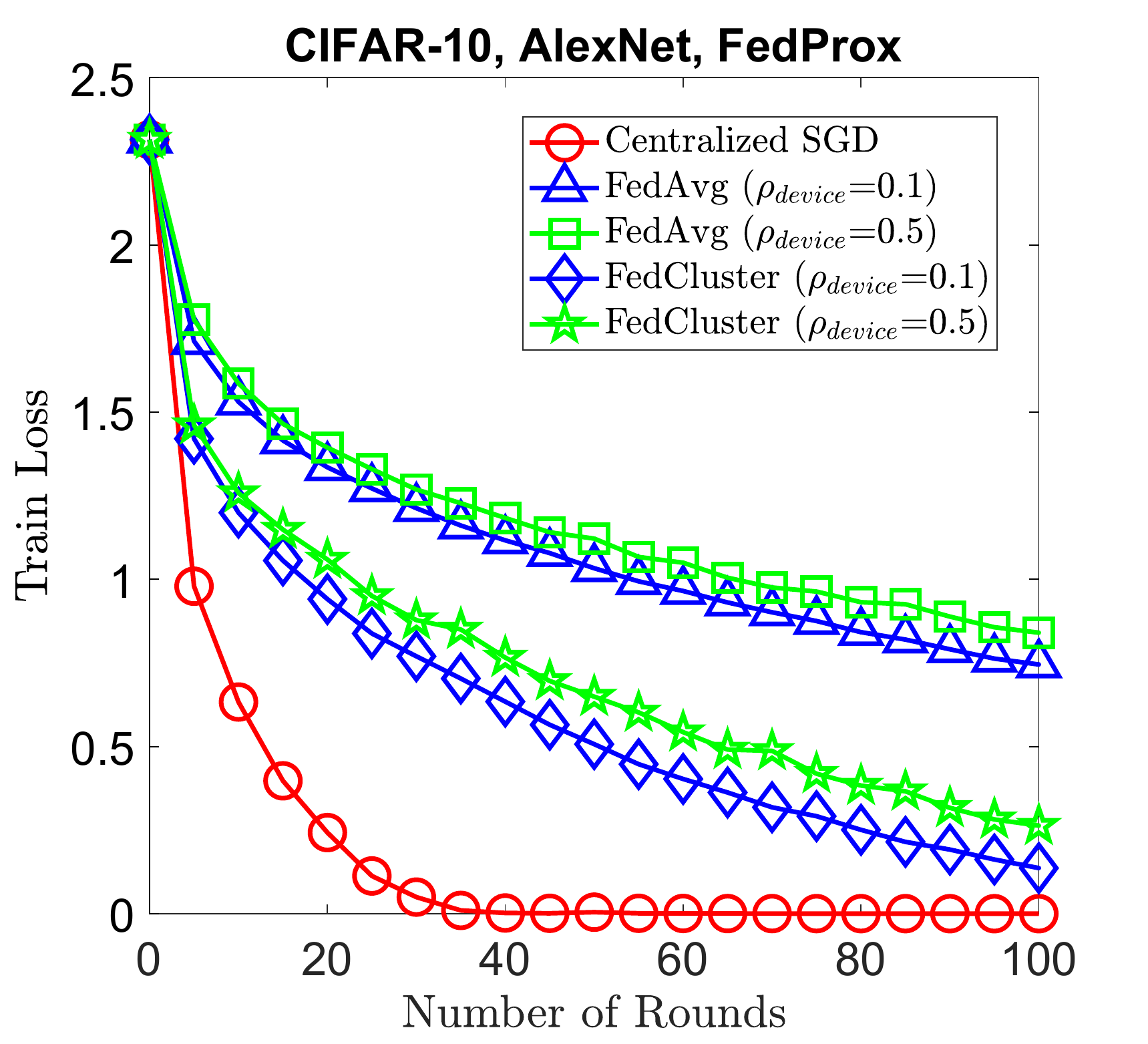}
	\includegraphics[width=0.24\textwidth,height=0.2\textwidth]{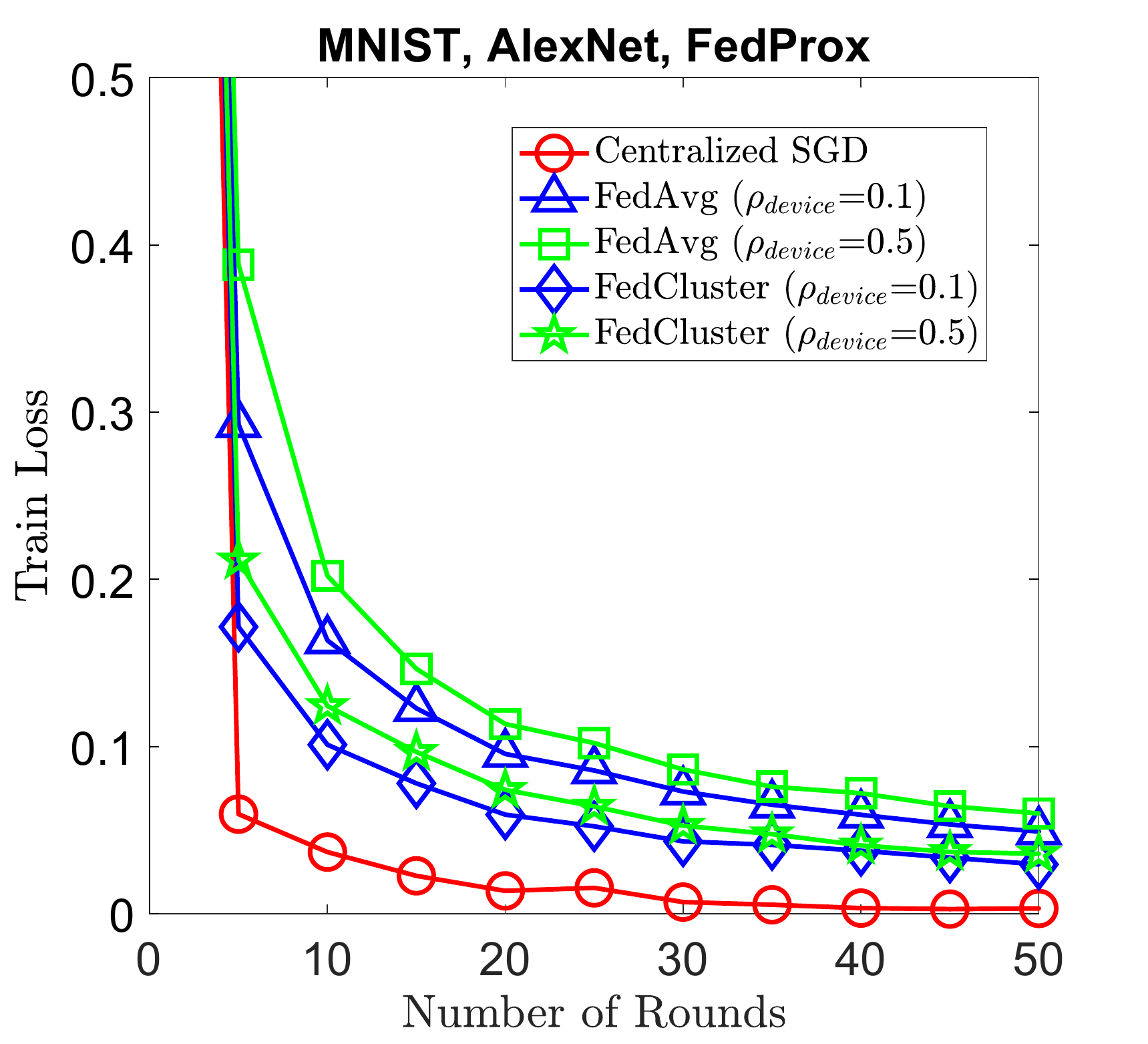}
	\vspace{-6mm}
	\caption{\small{Comparison between FedCluster and FedAvg under different local optimizers on CIFAR-10 (left) and MNIST (right).}}\label{fig: 3}
\end{figure}

\vspace{-3pt}
\subsection{Effect of Number of Clusters}\label{exp_3}
%\vspace{-3pt}
We further explore the impact of the number of clusters on the performance of FedCluster with local SGD. We set $\rho_{\text{device}}=0.1, 0.5$ and explore the number of clusters choices $M=5, 10, 20$. The {training} results on CIFAR-10 {and MNIST} are presented in the first and second row of Fig. \ref{fig: 4}, respectively. From the top row, it can be seen that FedCluster outperforms FedAvg under all choices of number of clusters on the complex CIFAR-10 dataset. In particular, a larger number of clusters leads to faster convergence of FedCluster, as indicated by \Cref{thm: nc}. From the bottom row, similar observations can be made on the MNIST dataset except that FedCluster with 5 clusters has comparable performance to FedAvg. In practice, the number of clusters of FedCluster is limited by the number of learning cycles allowed within a learning round.

%\begin{figure}[htbp]%[bth]%
%	\centering
%	\includegraphics[width=0.24\textwidth,height=0.2\textwidth]{_Figures/3_num_clusters/cifar10_alexnet_1_train_loss.eps}	
%	\includegraphics[width=0.24\textwidth,height=0.2\textwidth]{_Figures/3_num_clusters/cifar10_alexnet_5_train_loss.eps}
%	\includegraphics[width=0.24\textwidth,height=0.2\textwidth]{_Figures/3_num_clusters/mnist_alexnet_1_train_loss.eps}
%	\includegraphics[width=0.24\textwidth,height=0.2\textwidth]{_Figures/3_num_clusters/mnist_alexnet_5_train_loss.eps}
%	\vspace{-6mm}
%	\caption{\small{Comparison between FedCluster and FedAvg under different number of clusters on CIFAR-10 (top) and MNIST (bottom).}}\label{fig: 4}
%\end{figure}
\begin{figure}[htbp]%[bth]%
	\centering
	\includegraphics[width=0.24\textwidth,height=0.2\textwidth]{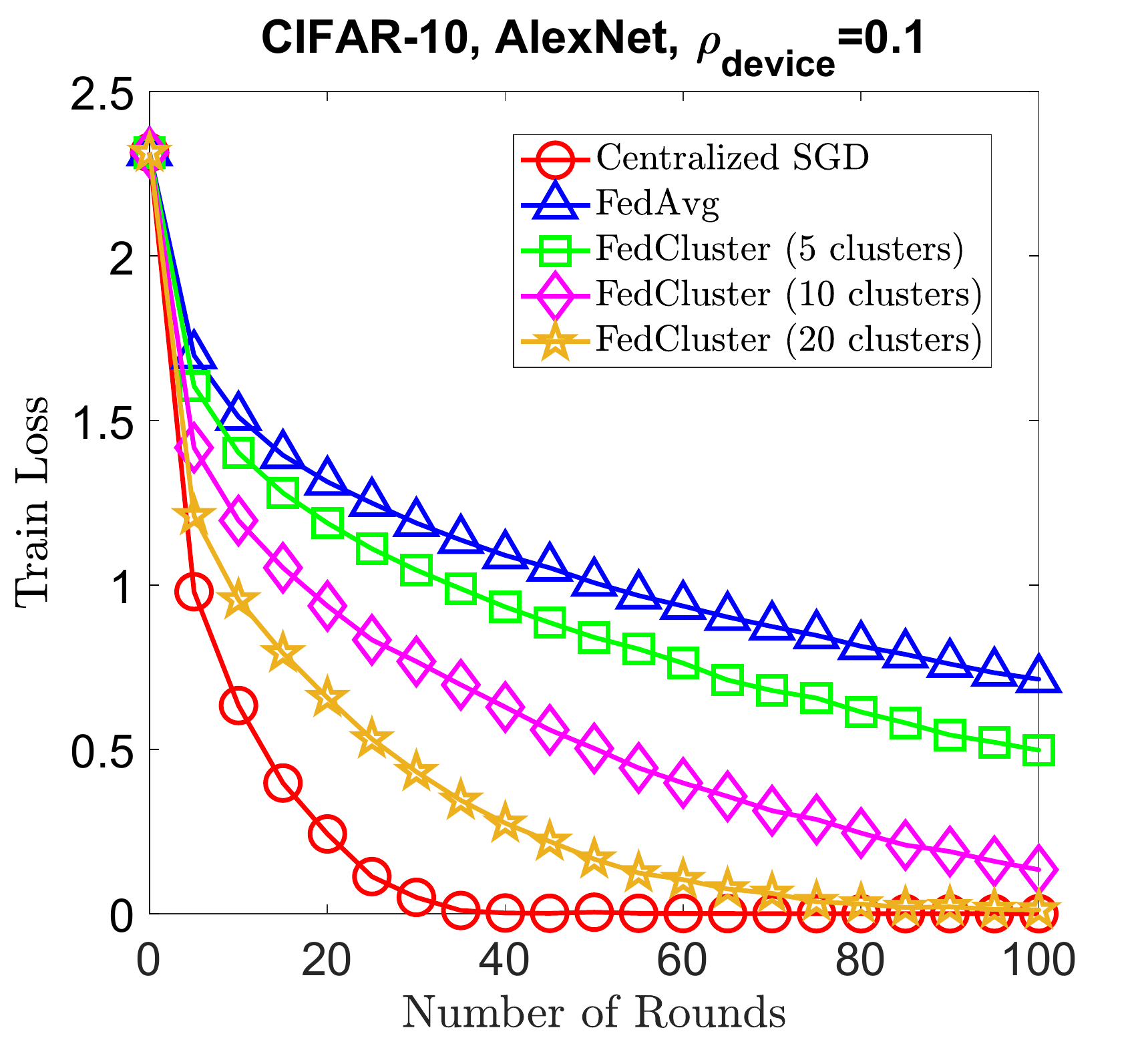}	
	\includegraphics[width=0.24\textwidth,height=0.2\textwidth]{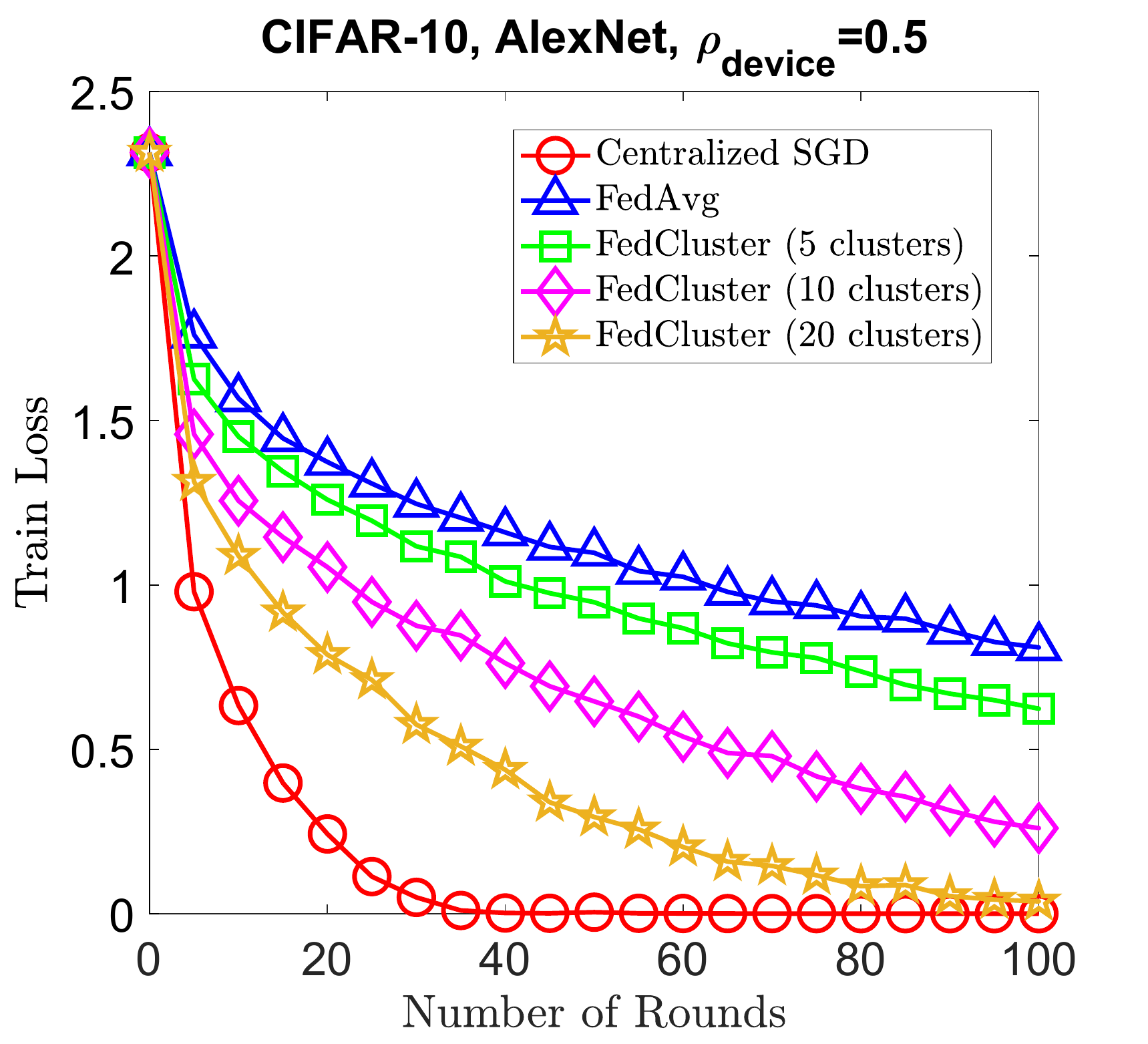}
	\includegraphics[width=0.24\textwidth,height=0.2\textwidth]{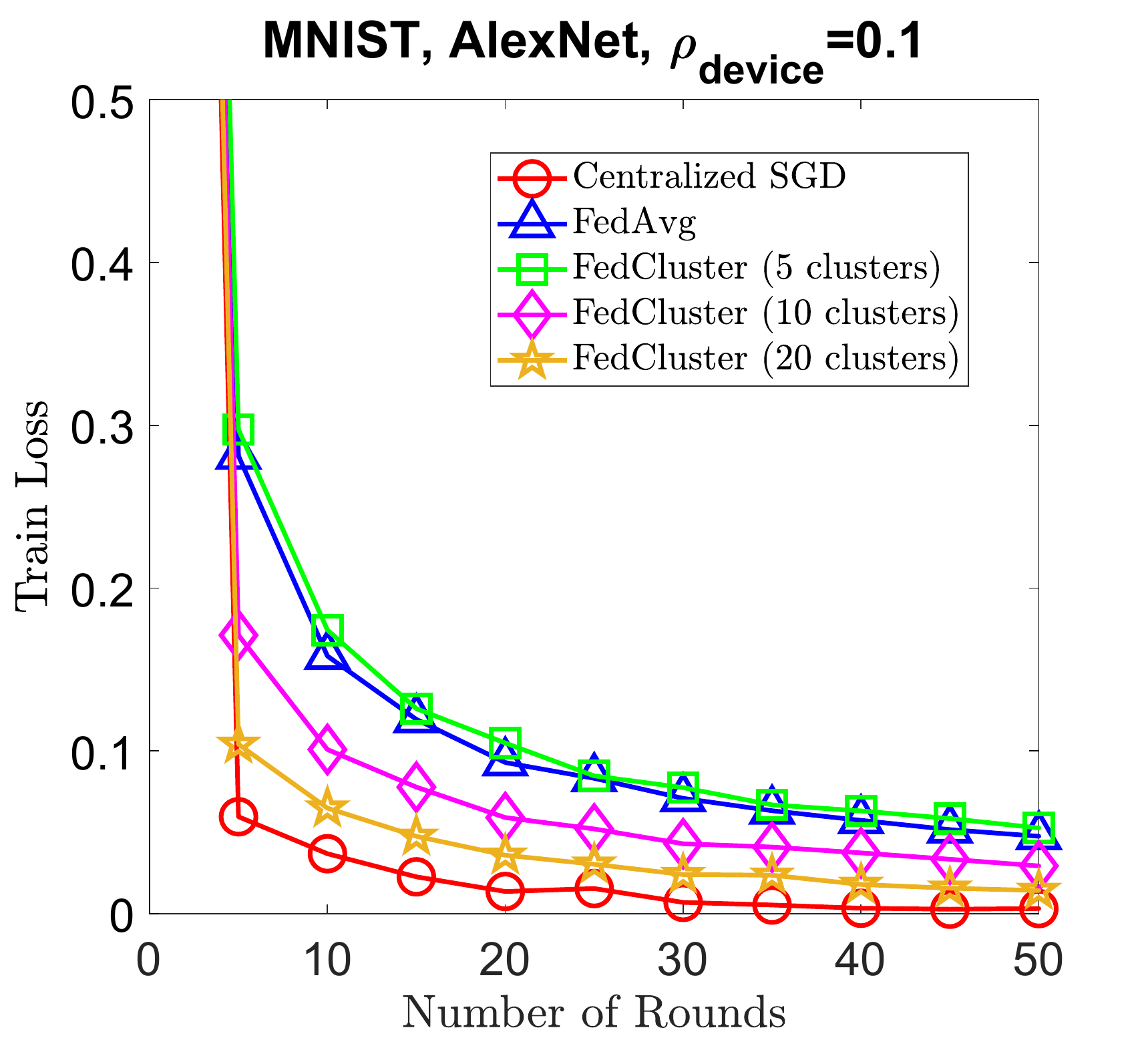}
	\includegraphics[width=0.24\textwidth,height=0.2\textwidth]{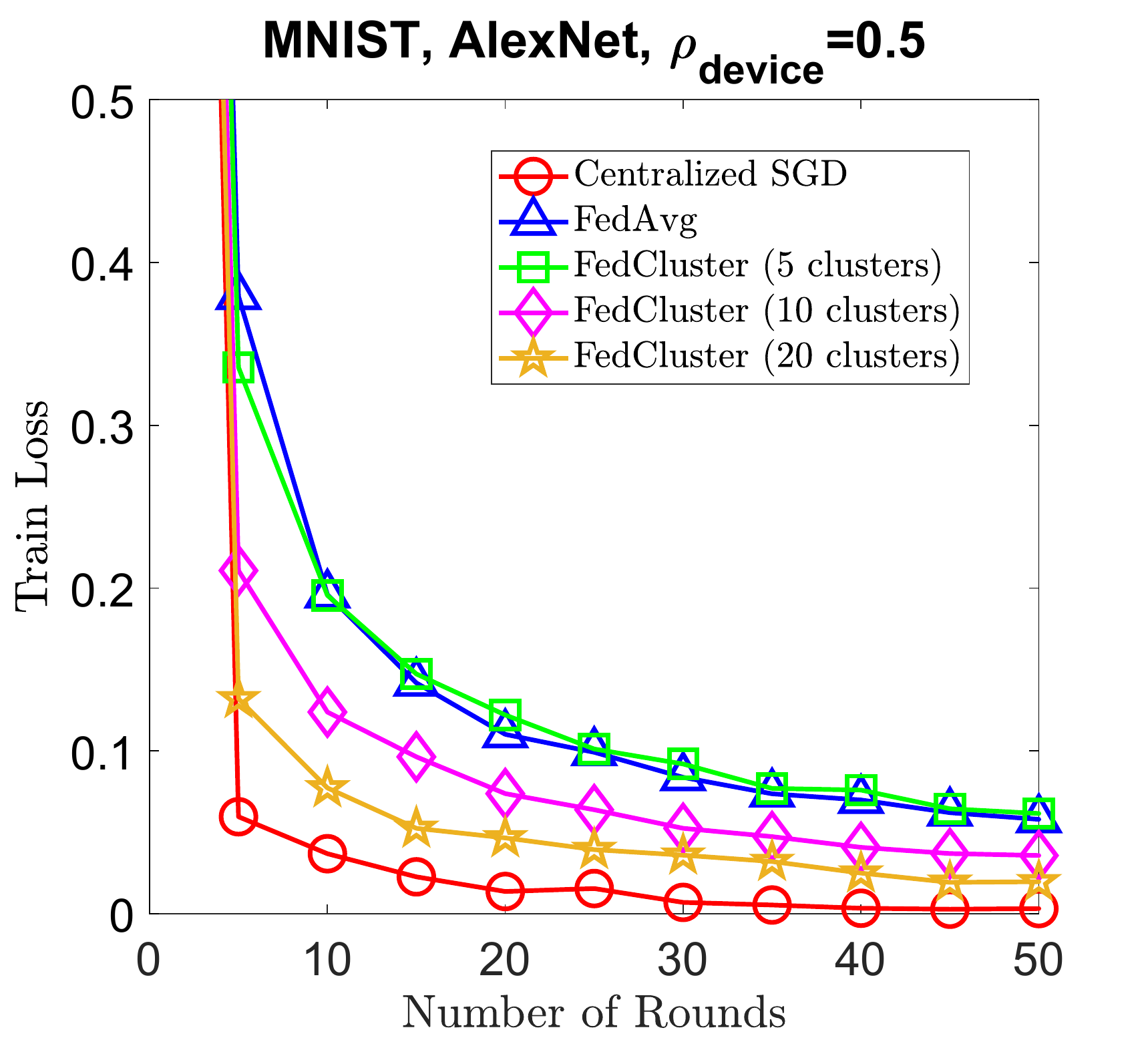}
	\vspace{-6mm}
	\caption{\small{Comparison between FedCluster and FedAvg under different number of clusters on CIFAR-10 (top) and MNIST (bottom).}}\label{fig: 4}
\end{figure}

\vspace{-6mm}
\subsection{Comparison under Different Cluster-level Data Heterogeneities}\label{exp_4}
%\vspace{-3pt}
We introduce an additional cluster-level data heterogeneity ratio $\rho_{\text{cluster}}\in[0.1, 1]$ to specify how the devices of FedCluster are clustered. Specifically, each cluster is assigned a different major class and $\rho_{\text{cluster}}\times 100\%$ of the devices in each cluster are assigned the same major class, whereas $(1-\rho_{\text{cluster}})/9\times 100\%$ of the devices in the cluster are assigned a different major class. Hence, a larger $\rho_{\text{cluster}}$ indicates a higher cluster-level data heterogeneity. 
We fix $\rho_{\text{device}}=0.5$ and compare FedCluster with FedAvg under different levels of cluster-level data heterogeneity, i.e., $\rho_{\text{cluster}}=0.1, 0.5, 0.9$. The training results on CIFAR-10 and MNIST are presented in Fig. \ref{fig: 5}. It can be seen that FedCluster achieves faster convergence than FedAvg under all levels of $\rho_{\text{cluster}}$. In addition, a smaller $\rho_{\text{cluster}}$ (i.e., lower cluster-level data heterogeneity) leads to slightly faster convergence of FedCluster, which is also indicated by \Cref{thm: nc}. 

%\begin{figure}[htbp]%[bth]%
%	\centering
%	\vspace{-2mm}
%	\includegraphics[width=0.24\textwidth,height=0.2\textwidth]{_Figures/4_rou_cluster/cifar10_alexnet_train_loss.eps}
%	\includegraphics[width=0.24\textwidth,height=0.2\textwidth]{_Figures/4_rou_cluster/mnist_alexnet_train_loss.eps}
%	\vspace{-6mm}
%	\caption{\small{Comparison between FedCluster and FedAvg under different cluster-level data heterogeneities on CIFAR-10 (left) and MNIST (right).}}\label{fig: 5}
%\end{figure}
\begin{figure}[htbp]%[bth]%
	\centering
		\vspace{-2mm}
	\includegraphics[width=0.24\textwidth,height=0.2\textwidth]{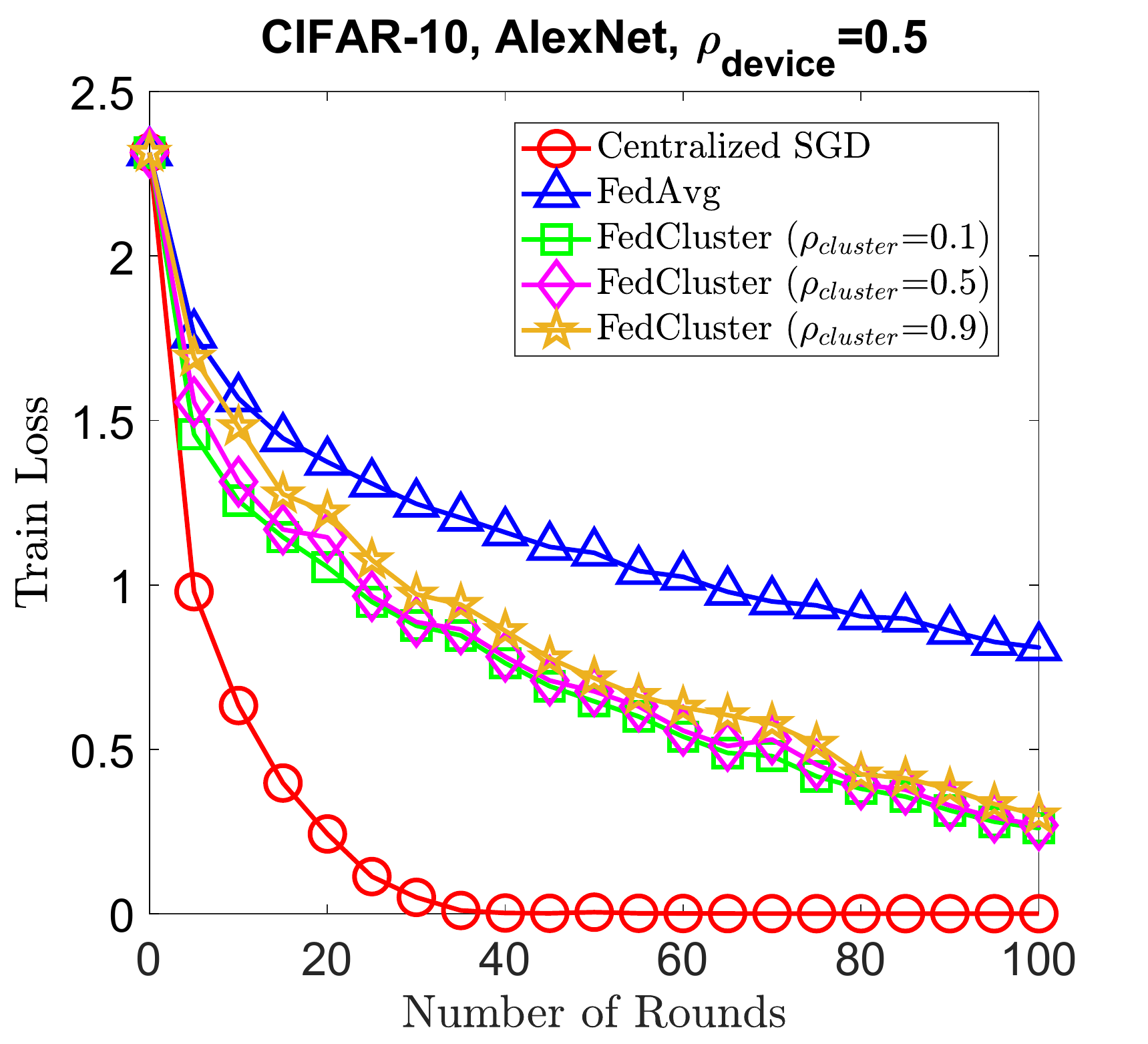}
	\includegraphics[width=0.24\textwidth,height=0.2\textwidth]{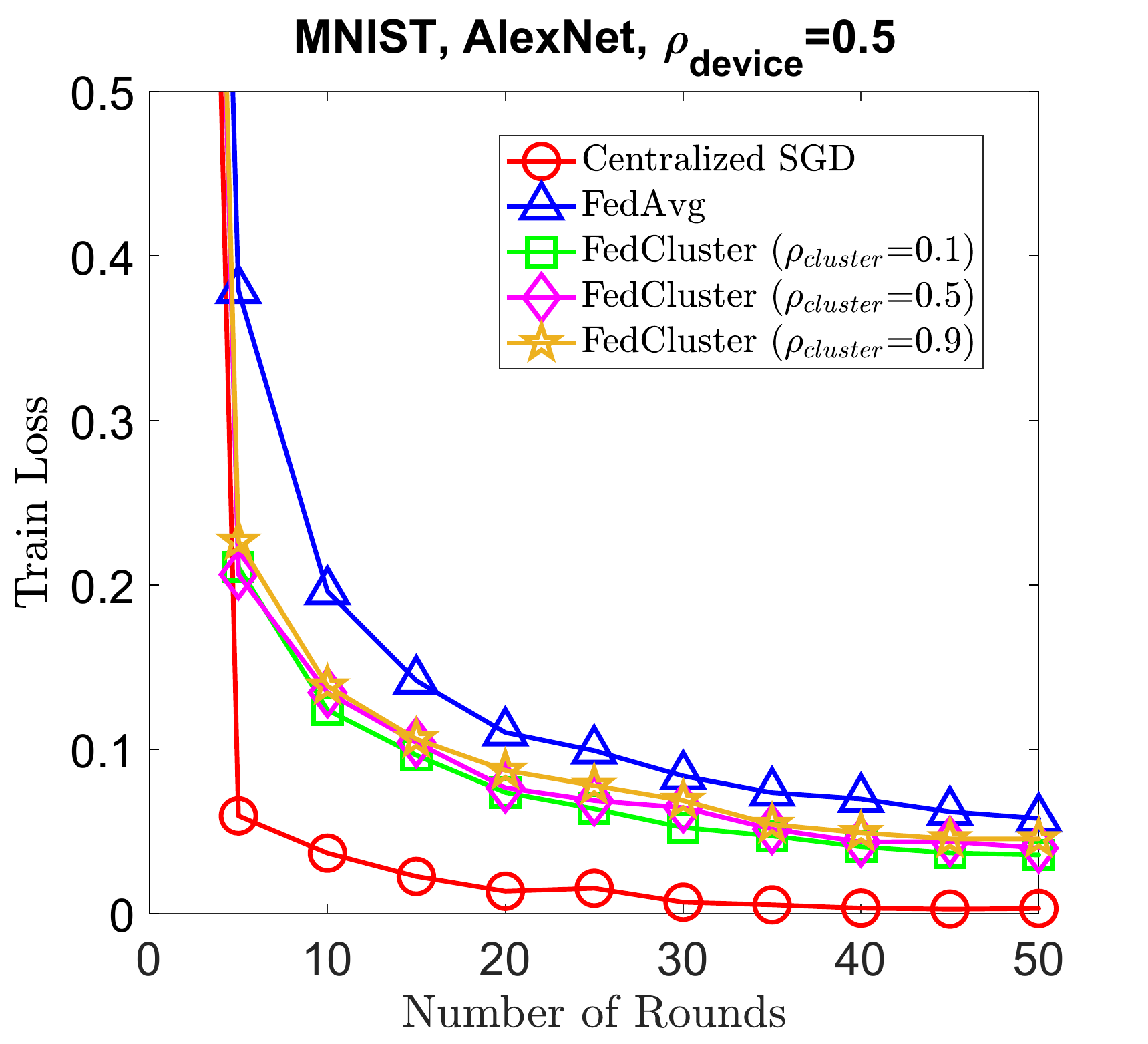}
	\vspace{-6mm}
	\caption{\small{Comparison between FedCluster and FedAvg under different cluster-level data heterogeneities on CIFAR-10 (left) and MNIST (right).}}\label{fig: 5}
\end{figure}

%\vspace{-5pt}
\section{Conclusion}
%\vspace{-5pt}
In this paper, we propose a novel federated learning framework named FedCluster. The FedCluster framework groups the edge devices into multiple clusters and activates them cyclically to perform federated learning in each learning round. We provide theoretical convergence analysis to show that FedCluster with local SGD achieves faster convergence than the conventional FedAvg algorithm in nonconvex optimization, and the convergence of FedCluster is less dependent on the device-level data heterogeneity. Our experiments on deep federated learning corroborate the theoretical findings. In the future, we expect and hope that FedCluster can be implemented in practical federated learning systems to demonstrate its fast convergence and provide great flexibility in scheduling the workload for the devices. Also, it is interesting to explore the impact of the clustering approach and the random cyclic order on the performance of FedCluster.

%\section*{Broader Impact}
%This work exploits techniques in multidisciplinary areas including federated learning, stochastic optimization and nonconvex optimization, and contributes new technical developments to analyze federated learning algorithms under clustering. The proposed FedCluster framework significantly improves the performance of the conventional federated learning system. This framework has the potential to be implemented in practical federated learning applications in diverse areas including IoT, health care, etc. In particular, many IoT applications apply federated learning to train ML models for monitoring health conditions and diagnosing disease. Our federated learning framework can help boost the training of these ML models online and provide instant and accurate diagnosis.

%\section*{Acknowledgment}

%\section*{References}

\vspace{2mm}

%\appendix
\appendices
\section{Supporting Lemmas}\label{sec_lemma}
{In this section, we present and prove some supporting lemmas for the convenience  of  proving the main Theorem \ref{thm: nc} in the next subsection.}

Throughout the analysis, we consider a random reshuffle of the clusters. To elaborate, for every $j$-th round, we define a random permutation $\sigma_{j}:\{1,\ldots,M\}\to\{1,\ldots,M\}$, and the reshuffled clusters $\mathcal{S}_{\sigma_j(1)}$, \ldots, $\mathcal{S}_{\sigma_j(M)}$ sequentially perform federated learning. For every $t$-th iteration of the $K$-th cycle in the $j$-th learning round, we define the following weighted average of the local models obtained by the devices in the active cluster $\mathcal{S}_{\sigma_j(K+1)}$.
\begin{align}\label{w_avg}
\overline{w}_{j,K,t}:=\sum_{k\in \mathcal{S}_{\sigma_{j}(K+1)}} \frac{p_k}{q_{\sigma_{j}(K+1)}} w_{j,K,t}^{(k)}.
\end{align}

Then, by the local update rule of the FedCluster in \Cref{alg: 1}, it holds that
\begin{align}\label{w_avg_update}
\overline{w}_{j,K,t+1}=\overline{w}_{j,K,t}-\eta_{j,K,t} g_{j,K,t},
\end{align}
where $g_{j,K,t}$ is defined as
\begin{align}\label{g}
g_{j,K,t}:=\sum_{k\in \mathcal{S}_{\sigma_{j}(K+1)}} \frac{p_k}{q_{\sigma_{j}(K+1)}} \nabla f(w_{j,K,t}^{(k)}; \xi_{j,K,t}^{(k)}).
\end{align}

We also define the following expectation of $g_{j,K,t}$ over the set of random variables $\xi_{j,K,t}:=\{\xi_{j,K,t}^{(k)}\}_{k\in \mathcal{S}_{\sigma_{j}(K+1)}}$.
\begin{align}%\label{Eg}
\overline{g}_{j,K,t}=&\mathbb{E}_{\xi_{j,K,t}} [g_{j,K,t}]\nonumber\\
=&\sum_{k\in \mathcal{S}_{\sigma_{j}(K+1)}} \frac{p_k}{q_{\sigma_{j}(K+1)}} \nabla f(w_{j,K,t}^{(k)}; \mathcal{D}^{(k)}).\nonumber
\end{align}

\begin{lemma}\label{lemma_SGvar_strongconvex}
	Let \Cref{assum: P} hold. Then, $g_{j,K,t}$ and $\overline{g}_{j,K,t}$ satisfy
	\begin{align}\label{SGvar_strongconvex}
	\mathbb{E} \|\overline{g}_{j,K,t}-g_{j,K,t}\|^2 \le \sum_{K=1}^{M} q_K^{-1} \sum_{k\in \mathcal{S}_K} p_k^2 s_k^2
	\end{align}
\end{lemma}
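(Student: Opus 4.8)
The plan is to treat $\overline{g}_{j,K,t}-g_{j,K,t}$ as a weighted average, over the devices $k$ in the active cluster $\mathcal{S}_{\sigma_j(K+1)}$, of the per-device stochastic-gradient noise terms $e_k := \nabla f(w_{j,K,t}^{(k)};\mathcal{D}^{(k)}) - \nabla f(w_{j,K,t}^{(k)};\xi_{j,K,t}^{(k)})$, the weights being $p_k/q_{\sigma_j(K+1)}$ and summing to one. First I would condition on the $\sigma$-algebra $\mathcal{F}$ generated by all randomness realized before the mini-batch draws $\{\xi_{j,K,t}^{(k)}\}_k$ of this iteration; in particular the permutation $\sigma_j$ and all iterates $w_{j,K,t}^{(k)}$ are $\mathcal{F}$-measurable. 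Given $\mathcal{F}$, each $e_k$ has zero mean, because $\nabla f(\cdot;\mathcal{D}^{(k)}) = \mathbb{E}_{\xi\sim\mathcal{D}^{(k)}}\nabla f(\cdot;\xi)$, and the family $\{e_k\}_{k\in\mathcal{S}_{\sigma_j(K+1)}}$ is independent since each device samples its own point independently of the others.

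The second step is the standard variance decomposition: expand $\big\|\sum_{k} \tfrac{p_k}{q_{\sigma_j(K+1)}} e_k\big\|^2$, apply $\mathbb{E}[\,\cdot\mid\mathcal{F}]$, and drop the cross terms $\mathbb{E}[\langle e_k,e_{k'}\rangle\mid\mathcal{F}]$ with $k\neq k'$, which vanish by conditional independence and the zero conditional mean. This leaves $\sum_{k\in\mathcal{S}_{\sigma_j(K+1)}} \tfrac{p_k^2}{q_{\sigma_j(K+1)}^2}\,\mathbb{E}[\|e_k\|^2\mid\mathcal{F}]$, and item~2 of \Cref{assum: P} — valid at every $w$, hence at the $\mathcal{F}$-measurable $w_{j,K,t}^{(k)}$ — bounds each $\mathbb{E}[\|e_k\|^2\mid\mathcal{F}]$ by $s_k^2$. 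So the conditional second moment of $\overline{g}_{j,K,t}-g_{j,K,t}$ is at most $q_{\sigma_j(K+1)}^{-2}\sum_{k\in\mathcal{S}_{\sigma_j(K+1)}}p_k^2 s_k^2$.

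Finally I would take the outer expectation over $\mathcal{F}$ (which randomizes which cluster is active through $\sigma_j$) and remove the dependence on the active cluster: since every term $q_K^{-1}\sum_{k\in\mathcal{S}_K}p_k^2 s_k^2$ is nonnegative, the active-cluster contribution is dominated by the sum over all $K=1,\dots,M$, and the right-hand side of \eqref{SGvar_strongconvex} is deterministic, so the inequality survives the expectation. I expect the main subtlety to be the conditioning bookkeeping in the first two steps — making sure the $e_k$ are genuinely conditionally independent and mean zero given the iterates — together with the very last reduction, where the naive per-device computation produces a $q_{\sigma_j(K+1)}^{-2}$ weight and one must argue (via the way the active cluster is selected within a round, e.g. an averaging over the random cyclic order) that it can be replaced by the $q_K^{-1}$ appearing in the statement.
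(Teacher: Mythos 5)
Your first two steps are exactly the paper's argument and are correct: condition on the $\sigma$-algebra generated before the samples $\{\xi_{j,K,t}^{(k)}\}_k$ are drawn, use that the per-device noise terms are conditionally independent with zero conditional mean to kill the cross terms in the expansion of the squared norm, and bound each conditional second moment by $s_k^2$ via item 2 of \Cref{assum: P}. This gives the conditional bound $q_{\sigma_j(K+1)}^{-2}\sum_{k\in\mathcal{S}_{\sigma_j(K+1)}}p_k^2 s_k^2$, which matches the intermediate quantity in the paper's proof.

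The gap is in your final step. You propose to pass from the active-cluster term to the full sum by nonnegativity (``the active-cluster contribution is dominated by the sum over all $K$''), but this pointwise domination is false: the target carries weights $q_K^{-1}$ while your conditional bound carries $q_{\sigma_j(K+1)}^{-2}$, and since $q_K\le 1$ one has $q_K^{-2}\ge q_K^{-1}$, so the single active-cluster term can strictly exceed the entire right-hand side. For instance with $M=2$, $q_1=0.1$, the conditional bound when cluster $1$ is active is $100\sum_{k\in\mathcal{S}_1}p_k^2 s_k^2$, whereas the claimed bound is $10\sum_{k\in\mathcal{S}_1}p_k^2 s_k^2+\tfrac{10}{9}\sum_{k\in\mathcal{S}_2}p_k^2 s_k^2$. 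You do flag this as ``the main subtlety,'' but you never resolve it, and the resolution is not a domination argument at all: the paper takes the expectation over which cluster is active, using that the active cluster equals $\mathcal{S}_K$ with probability $q_K$, so that $\mathbb{E}\bigl[q_{\sigma_j(K+1)}^{-2}\sum_{k\in\mathcal{S}_{\sigma_j(K+1)}}p_k^2 s_k^2\bigr]=\sum_{K=1}^{M}q_K\cdot q_K^{-2}\sum_{k\in\mathcal{S}_K}p_k^2 s_k^2=\sum_{K=1}^{M}q_K^{-1}\sum_{k\in\mathcal{S}_K}p_k^2 s_k^2$ — an exact identity rather than an inequality. Your proof needs this averaging computation (and the stated distribution of the active cluster) spelled out; without it the bound does not follow.
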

\begin{proof}
	\begin{align}
	&\mathbb{E}\|\overline{g}_{j,K,t}-g_{j,K,t}\|^2 \nonumber\\
	&= \mathbb{E} \Big\| \sum_{k\in \mathcal{S}_{\sigma_{j}(K+1)}} \frac{p_k}{q_{\sigma_{j}(K+1)}}\nonumber\\ &\quad[\nabla f(w_{j,K,t}^{(k)}; \xi_{j,K,t}^{(k)}) - \nabla f(w_{j,K,t}^{(k)}; \mathcal{D}^{(k)})] \Big\|^2 \label{SGvar_strongconvex2}\\
	&\stackrel{(i)}{=} \mathbb{E}\Big[ \sum_{k\in \mathcal{S}_{\sigma_{j}(K+1)}} \frac{p_k^2}{q_{\sigma_{j}(K+1)}^2}\nonumber\\ &\quad\mathbb{E}_{\xi_{j, K, t}} \Big\| \nabla f(w_{j,K,t}^{(k)}; \xi_{j,K,t}^{(k)}) - \nabla f(w_{j,K,t}^{(k)}; \mathcal{D}^{(k)}) \Big\|^2 \Big]\nonumber\\
	&\stackrel{(ii)}{\le} \mathbb{E}\Big( \sum_{k\in \mathcal{S}_{\sigma_{j}(K+1)}} \frac{p_k^2 s_k^2}{q_{\sigma_{j}(K+1)}^2} \Big)\nonumber\\
	&=\sum_{K=1}^{M} q_K \sum_{k\in \mathcal{S}_K} \frac{p_k^2 s_k^2}{q_K^2} \nonumber\\
	&=\sum_{K=1}^{M} q_K^{-1} \sum_{k\in \mathcal{S}_K} p_k^2 s_k^2, \nonumber%\label{SGvar_strongconvex3}
	\end{align}
	\noindent where (i) uses the facts that $\mathbb{E}_{\xi_{j,K,t}^{(k)}} [\nabla f(w_{j,K,t}^{(k)}; \xi_{j,K,t}^{(k)}) - \nabla f(w_{j,K,t}^{(k)}; \mathcal{D}^{(k)})]=0$ and that $\sigma_{j}(K+1)$ and all the samples in $\{\xi_{j,K,t}^{(k)}\}_{k\in \mathcal{S}_{\sigma_{j}(K+1)}}$ are independent, and (ii) follows from item 2 of \Cref{assum: P}. 
\end{proof}

\begin{lemma}\label{lemma_FedAvg_howfar_strongconvex}
	Suppose the learning rate $\eta_{j,K,t}$ is non-increasing with regard to $(jM+K)E+t$. Then, for any $j,K,t$ it holds that 
	\begin{align}\label{FedAvg_howfar_sc}
		&\mathbb{E}\|\overline{w}_{j,K,t}-W_{jM}\|^2\nonumber\\ 
		&\le \mathbb{E} \Big[ \sum_{k\in \mathcal{S}_{\sigma_{j}(K+1)}} \frac{p_k}{q_{\sigma_{j}(K+1)}} \Big\|w_{j,K,t}^{(k)} - W_{jM}\Big\|^2 \Big]\nonumber\\ 
		&\le \eta_{j,0,0}^2E^2M^2G^2 
	\end{align}
\end{lemma}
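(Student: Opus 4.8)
The plan is to prove Lemma \ref{lemma_FedAvg_howfar_strongconvex} in two stages, corresponding to the two inequalities in \eqref{FedAvg_howfar_sc}. The first inequality is just convexity of the squared norm (Jensen's inequality): since $\overline{w}_{j,K,t}$ is the convex combination $\sum_{k} \frac{p_k}{q_{\sigma_j(K+1)}} w_{j,K,t}^{(k)}$ (the weights $p_k/q_{\sigma_j(K+1)}$ are nonnegative and sum to one by definition of $q_{\sigma_j(K+1)}$), we have $\|\overline{w}_{j,K,t}-W_{jM}\|^2 = \|\sum_k \frac{p_k}{q_{\sigma_j(K+1)}}(w_{j,K,t}^{(k)} - W_{jM})\|^2 \le \sum_k \frac{p_k}{q_{\sigma_j(K+1)}}\|w_{j,K,t}^{(k)} - W_{jM}\|^2$, and then take expectations. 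So the real content is the second inequality.

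For the second inequality, I would bound $\mathbb{E}\|w_{j,K,t}^{(k)} - W_{jM}\|^2$ for a single device $k$ in the active cluster. The idea is to unroll the SGD trajectory from the start of the $j$-th round. Between round-start model $W_{jM}$ and the current iterate $w_{j,K,t}^{(k)}$, the model has undergone: (a) the $K$ completed cycles $0,1,\ldots,K-1$ of round $j$, each consisting of $E$ local SGD steps followed by an averaging step, and (b) the first $t$ local steps of cycle $K$ on device $k$. Crucially, every one of these micro-steps is of the form "subtract $\eta \cdot (\text{stochastic gradient})$", and the averaging step is a convex combination of such iterates, so by convexity of $\|\cdot\|^2$ again the averaged model stays within the convex hull of the local models. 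Hence $w_{j,K,t}^{(k)} - W_{jM}$ is a sum of at most $(KE + t) \le ME$ terms, each of the form $-\eta_{\cdot}\nabla f(\cdot;\xi)$ (possibly with convex-combination weights attached from the averaging steps, but those weights only help). I would then write $\|w_{j,K,t}^{(k)} - W_{jM}\| \le \sum (\text{number of steps}) \cdot \eta_{j,0,0} \cdot \|\nabla f(\cdot;\xi)\|$ using the triangle inequality and the fact that $\eta_{j,K,t}$ is non-increasing in $(jM+K)E+t$ so every learning rate appearing is $\le \eta_{j,0,0}$. By \eqref{df_G} / item 3 of \Cref{assum: P}, each $\mathbb{E}\|\nabla f(\cdot;\xi)\|^2 \le G^2$. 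Squaring $\|w_{j,K,t}^{(k)} - W_{jM}\| \le (KE+t)\eta_{j,0,0} \max\|\nabla f\|$ and taking expectations, one gets $\mathbb{E}\|w_{j,K,t}^{(k)} - W_{jM}\|^2 \le (KE+t)^2 \eta_{j,0,0}^2 G^2 \le E^2 M^2 \eta_{j,0,0}^2 G^2$ since $KE + t \le (M-1)E + (E-1) < ME$. A slightly cleaner route that avoids tracking the averaging-step weights explicitly: prove by induction on the "global micro-step counter" that $\mathbb{E}\|(\text{current model}) - W_{jM}\|^2 \le (\text{step count})^2\eta_{j,0,0}^2 G^2$, using at the averaging step that $\mathbb{E}\|\overline{w} - W_{jM}\|^2 \le \sum_k \frac{p_k}{q}\mathbb{E}\|w^{(k)} - W_{jM}\|^2$ (first inequality of this lemma) and at a local step that $\mathbb{E}\|w - \eta g - W_{jM}\|^2 \le (\mathbb{E}^{1/2}\|w-W_{jM}\|^2 + \eta_{j,0,0} G)^2$ by Minkowski's inequality in $L^2$.

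The step I expect to be the main (though mild) obstacle is bookkeeping the convex-combination structure cleanly across the averaging steps — i.e., convincing oneself rigorously that the aggregation step $W_{jM+K+1} = \sum_k \frac{p_k}{q_\cdot} w_{j,K,E}^{(k)}$ does not increase the relevant second moment and that the total number of "$-\eta\nabla f$" increments separating $w_{j,K,t}^{(k)}$ from $W_{jM}$ is indeed at most $ME$. Once the induction invariant is set up correctly, everything else is the triangle/Minkowski inequality plus the uniform gradient bound $G$ and the monotone learning-rate assumption; no smoothness, variance bound, or nonconvexity is actually needed for this particular lemma. The final arithmetic $(KE+t)^2 \le E^2M^2$ is immediate from $0\le K\le M-1$ and $0\le t\le E-1$.
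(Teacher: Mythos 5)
Your proposal is correct in substance and reaches the same bound, but the route through the second inequality differs from the paper's. The paper first splits $w_{j,K,t}^{(k)}-W_{jM}$ into the $K$ completed-cycle increments $\overline{w}_{j,K',E}-\overline{w}_{j,K',0}$ plus the current partial cycle, pays a factor $K+1\le M$ via $\|\sum_{i=1}^m x_i\|^2\le m\sum_i\|x_i\|^2$, and then inside each cycle uses the weighted Cauchy--Schwarz bound $\|\sum_s\eta_s v_s\|^2\le(\sum_s\eta_s)(\sum_s\eta_s\|v_s\|^2)$ together with $\mathbb{E}\|\nabla f(\cdot;\xi)\|^2\le G^2$ and $\mathbb{E}\|g_{j,K',s}\|^2\le G^2$, arriving at $\eta_{j,0,0}^2G^2(Mt^2+MKE^2)\le\eta_{j,0,0}^2E^2M^2G^2$. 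Your induction on the global micro-step counter, using Minkowski's inequality in $L^2$ at each local step and Jensen at each averaging step, is a cleaner single-pass argument and actually yields the slightly tighter intermediate bound $(KE+t)^2\eta_{j,0,0}^2G^2$ (note $(KE+t)^2\le M(KE^2+t^2)$ by Cauchy--Schwarz, so it dominates the paper's). One caution: your first formulation, which applies the triangle inequality pathwise and then bounds via ``$\max\|\nabla f\|$'', does not directly follow from the assumption, since item 3 of \Cref{assum: P} only controls the second moment of each individual gradient, not of their pathwise maximum; the Minkowski-in-$L^2$ version you give as the ``cleaner route'' is the one that actually closes the argument and should be taken as the proof. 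You are also right that only the gradient bound $G$ and the monotonicity of the learning rate are used here; smoothness and the variance bound play no role in this lemma, exactly as in the paper.
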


\begin{proof}
	The first inequality in \eqref{FedAvg_howfar_sc} follows from Jensen's inequality applied to $\|\cdot\|^2$ and the fact that $\overline{w}_{j,K,t}=\sum_{k\in \mathcal{S}_{\sigma_{j}(K+1)}} \frac{p_k}{q_{\sigma_{j}(K+1)}} w_{j,K,t}^{(k)}$. Next, we prove the second inequality. 
	
	Define $\xi_j:=\{\xi_{j, K, t}^{(k)}: K=0,1,\ldots,M-1; t=0,1,\ldots,E-1; k\in\mathcal{S}_{\sigma_{j}(K+1)}\}$. Then, 
	\begin{align}
	&\mathbb{E}_{\xi_j}\|w_{j,K,t}^{(k)}-W_{jM}\|^2\nonumber\\
	&\le \mathbb{E}_{\xi_j}\Big\|w_{j,K,t}^{(k)}-W_{jM+K}+\sum_{K'=0}^{K-1} (W_{jM+K'+1}-W_{jM+K'})\Big\|^2\nonumber\\
	&\stackrel{(i)}{\le} (K+1)\mathbb{E}_{\xi_j}{\|w_{j,K,t}^{(k)}-w_{j,K,0}^{(k)}\|}^2\nonumber\\
	&\quad +(K+1)\sum_{K'=0}^{K-1}\mathbb{E}_{\xi_j}{\|\overline{w}_{j,K',E}-\overline{w}_{j,K',0}\|}^2 \nonumber\\
	&\stackrel{(ii)}{\le} M\mathbb{E}_{\xi_j}\Big\|\sum_{s=0}^{t-1} \eta_{j,K,s}\nabla f(w_{j,K,s}; \xi_{j,K,s}^{(k)})\Big\|^2\nonumber\\
	&\quad +M\sum_{K'=0}^{K-1}\mathbb{E}_{\xi_j}\Big\|\sum_{s=0}^{E-1} \eta_{j,K',s}g_{j,K',s}\Big\|^2 \nonumber\\
	&\stackrel{(iii)}{\le} M\Big(\sum_{s=0}^{t-1} \eta_{j,K,s}\Big) \sum_{s=0}^{t-1} \eta_{j,K,s}\mathbb{E}_{\xi_j}{\|\nabla f(w_{j,K,s}; \xi_{j,K,s}^{(k)})\|}^2\nonumber\\
	&~~~~+M \sum_{K'=0}^{K-1} \Big(\sum_{s'=0}^{E-1} \eta_{j,K',s'}\Big)\sum_{s=0}^{E-1} \eta_{j,K',s}\mathbb{E}_{\xi_j}{\|g_{j,K',s}\|}^2 \nonumber\\
	&\stackrel{(iv)}{\le} \eta_{j,0,0}^2G^2 \Big(Mt^2+MKE^2\Big) \nonumber\\
	&\stackrel{(v)}{\le} \eta_{j,0,0}^2E^2M^2G^2,\nonumber
	\end{align}
	\noindent where (i) applies Jensen's inequality to $\|\cdot\|^2$ and uses the fact that $W_{jM+K'+1}=\overline{w}_{j,K',E}=\overline{w}_{j,K'+1,0}=w_{j,K'+1,0}^{(k)}$ ($0\le K'\le M-2$, $k\in\mathcal{S}_{\sigma_{j}(K'+1)}$) based on the update rule of \Cref{alg: 1}, (ii) uses \eqref{w_avg_update} and $K\le M-1$, (iii) applies Jensen's inequality, (iv) uses item 3 of \Cref{assum: P}, the inequality \eqref{g_bound} below and the fact that $\eta_{j,K,s}\le\eta_{j,0,0}$ for $K, s\ge 0$, and (v) uses the fact that $K\le M-1$ and $0\le t\le E-1$. 
	\begin{align}\label{g_bound}
	&\mathbb{E}_{\xi_j}\|g_{j,K,t}\|^2\nonumber\\
	&=\mathbb{E}_{\xi_j}\Big\|\sum_{k\in \mathcal{S}_{\sigma_{j}(K+1)}} \frac{p_k}{q_{\sigma_{j}(K+1)}} \nabla f(w_{j,K,t}^{(k)}; \xi_{j,K,t}^{(k)})\Big\|^2\nonumber\\
	&\le\mathbb{E}_{\xi_j}\Big[\sum_{k\in \mathcal{S}_{\sigma_{j}(K+1)}} \frac{p_k}{q_{\sigma_{j}(K+1)}} \Big\|\nabla f(w_{j,K,t}^{(k)}; \xi_{j,K,t}^{(k)})\Big\|^2\Big]\nonumber\\
	&\le \mathbb{E}_{\xi_j}\Big[\sum_{k\in \mathcal{S}_{\sigma_{j}(K+1)}} \frac{p_k}{q_{\sigma_{j}(K+1)}} G^2\Big]\le G^2.
	\end{align}
\end{proof}

\section{Proof of \Cref{thm: nc}}
By the $L$-smoothness of the objective function $f$, we obtain that
\begin{align}\label{err1step_nc}
&\mathbb{E}[f(\overline{w}_{j,K,t+1})-f(\overline{w}_{j,K,t})]\nonumber\\
&\le \mathbb{E}\langle \nabla f(\overline{w}_{j,K,t}), \overline{w}_{j,K,t+1} - \overline{w}_{j,K,t}\rangle \nonumber\\
&\quad+ \frac{L}{2}\mathbb{E}\|\overline{w}_{j,K,t+1} - \overline{w}_{j,K,t}\|^2\nonumber\\
&\stackrel{(i)}{=} \mathbb{E}\Big\langle \nabla f(\overline{w}_{j,K,t}), \nonumber\\
&\quad-\eta_{j,K,t}\sum_{k\in \mathcal{S}_{\sigma_{j}(K+1)}} \frac{p_k}{q_{\sigma_{j}(K+1)}} \nabla f(w_{j,K,t}^{(k)}; \xi_{j,K,t}^{(k)}) \Big\rangle \nonumber\\
&\quad+\frac{L}{2} \mathbb{E}\Big\| \eta_{j,K,t} \sum_{k\in \mathcal{S}_{\sigma_{j}(K+1)}} \frac{p_k}{q_{\sigma_{j}(K+1)}} \nabla f(w_{j,K,t}^{(k)}; \xi_{j,K,t}^{(k)}) \Big\|^2\nonumber\\
&\stackrel{(ii)}{\le} -\eta_{j,K,t} \mathbb{E}\Big\langle \nabla f(\overline{w}_{j,K,t}), \nonumber\\
&\quad\sum_{k\in \mathcal{S}_{\sigma_{j}(K+1)}} \frac{p_k}{q_{\sigma_{j}(K+1)}} \nabla f(w_{j,K,t}^{(k)}; \mathcal{D}^{(k)}) \Big\rangle \nonumber\\
&\quad+ 2L\eta_{j,K,t}^2 \mathbb{E}\Big\| \nabla f(W_{jM}) \Big\|^2\nonumber\\
&\quad+2L\eta_{j,K,t}^2 \mathbb{E}\Big\| \sum_{k\in \mathcal{S}_{\sigma_{j}(K+1)}} \frac{p_k}{q_{\sigma_{j}(K+1)}} \nabla f(W_{jM}; \mathcal{D}^{(k)}) \nonumber\\
&\quad- \nabla f(W_{jM}) \Big\|^2 +2L\eta_{j,K,t}^2 \mathbb{E}\Big\| \sum_{k\in \mathcal{S}_{\sigma_{j}(K+1)}} \frac{p_k}{q_{\sigma_{j}(K+1)}} \nonumber\\
&\quad\quad[\nabla f(w_{j,K,t}^{(k)}; \mathcal{D}^{(k)})-\nabla f(W_{jM}; \mathcal{D}^{(k)})] \Big\|^2\nonumber\\
&\quad+2L\eta_{j,K,t}^2 \mathbb{E}\Big\| \sum_{k\in \mathcal{S}_{\sigma_{j}(K+1)}} \frac{p_k}{q_{\sigma_{j}(K+1)}} \nonumber\\
&\quad\quad[\nabla f(w_{j,K,t}^{(k)}; \xi_{j,K,t}^{(k)})-\nabla f(w_{j,K,t}^{(k)}; \mathcal{D}^{(k)})] \Big\|^2\nonumber\\
&\stackrel{(iii)}{\le} -\eta_{j,K,t} \mathbb{E}\Big\langle \nabla f(\overline{w}_{j,K,t}), \nonumber\\
&\quad\sum_{k\in \mathcal{S}_{\sigma_{j}(K+1)}} \frac{p_k}{q_{\sigma_{j}(K+1)}} [\nabla f(w_{j,K,t}^{(k)}; \mathcal{D}^{(k)}) - \nabla f(W_{jM}; \mathcal{D}^{(k)})] \Big\rangle \nonumber\\
&\quad-\eta_{j,K,t} \mathbb{E}\Big\langle \nabla f(\overline{w}_{j,K,t}) - \nabla f(W_{jM}),\nonumber\\
&\quad\sum_{k\in \mathcal{S}_{\sigma_{j}(K+1)}} \frac{p_k}{q_{\sigma_{j}(K+1)}} \nabla f(W_{jM}; \mathcal{D}^{(k)}) \Big\rangle \nonumber\\
&\quad-\eta_{j,K,t} \mathbb{E}\Big\langle \nabla f(W_{jM}), \sum_{k\in \mathcal{S}_{\sigma_{j}(K+1)}} \frac{p_k}{q_{\sigma_{j}(K+1)}} \nabla f(W_{jM}; \mathcal{D}^{(k)}) \Big\rangle \nonumber\\
&\quad+2L\eta_{j,K,t}^2 \mathbb{E}\Big\| \nabla f(W_{jM}) \Big\|^2\nonumber\\
&\quad+2L\eta_{j,K,t}^2 \mathbb{E}\Big\| \nabla f(W_{jM}; \mathcal{D}^{(\mathcal{S}_{\sigma_{j}(K+1)})}) - \nabla f(W_{jM}) \Big\|^2\nonumber\\
&\quad+2L\eta_{j,K,t}^2 \mathbb{E} \Big[ \sum_{k\in \mathcal{S}_{\sigma_{j}(K+1)}} \frac{p_k}{q_{\sigma_{j}(K+1)}} \nonumber\\
&\quad\quad\Big\|\nabla f(w_{j,K,t}^{(k)}; \mathcal{D}^{(k)})-\nabla f(W_{jM}; \mathcal{D}^{(k)}) \Big\|^2 \Big]\nonumber\\
&\quad+2L\eta_{j,K,t}^2 \sum_{K=1}^{M} q_K^{-1} \sum_{k\in \mathcal{S}_K} p_k^2 s_k^2\nonumber\\
&\stackrel{(iv)}{\le} \eta_{j,K,t} \mathbb{E}\Big\{\Big\| \nabla f(\overline{w}_{j,K,t})\Big\|\Big\|\sum_{k\in \mathcal{S}_{\sigma_{j}(K+1)}} \frac{p_k}{q_{\sigma_{j}(K+1)}} \nonumber\\
&\quad [\nabla f(w_{j,K,t}^{(k)}; \mathcal{D}^{(k)}) - \nabla f(W_{jM}; \mathcal{D}^{(k)})] \Big\| \Big\} \nonumber\\
&\quad+\eta_{j,K,t} \mathbb{E}\Big\{\Big\|\nabla f(\overline{w}_{j,K,t}) - \nabla f(W_{jM})\Big\| \nonumber\\
&\quad\Big\|\nabla f(W_{jM}; \mathcal{D}^{(\mathcal{S}_{\sigma_{j}(K+1)})})\Big\|\Big\}\nonumber\\
&\quad-\eta_{j,K,t} \mathbb{E}\Big\| \nabla f(W_{jM}) \Big\|^2 + 2L\eta_{j,K,t}^2 \mathbb{E}\Big\| \nabla f(W_{jM}) \Big\|^2\nonumber\\
&\quad+2L\eta_{j,K,t}^2 H_{\text{cluster}}\nonumber\\
&\quad + 2L^3\eta_{j,K,t}^2 \mathbb{E} \Big[ \sum_{k\in \mathcal{S}_{\sigma_{j}(K+1)}} \frac{p_k}{q_{\sigma_{j}(K+1)}} \Big\|w_{j,K,t}^{(k)} - W_{jM}\Big\|^2 \Big]\nonumber\\
&\quad+2L\eta_{j,K,t}^2 \sum_{K=1}^{M} q_K^{-1} \sum_{k\in \mathcal{S}_K} p_k^2 s_k^2\nonumber\\
&\stackrel{(v)}{\le} \eta_{j,K,t}G \mathbb{E} \Big\{ \sum_{k\in \mathcal{S}_{\sigma_{j}(K+1)}} \frac{p_k}{q_{\sigma_{j}(K+1)}} \nonumber\\
&\quad\Big\|\nabla f(w_{j,K,t}^{(k)}; \mathcal{D}^{(k)}) - \nabla f(W_{jM}; \mathcal{D}^{(k)}) \Big\| \Big\} \nonumber\\
&\quad+\eta_{j,K,t}LG \mathbb{E} \Big\|\overline{w}_{j,K,t} - W_{jM}\Big\|\nonumber\\ &\quad-\eta_{j,K,t}(1-2L\eta_{j,K,t}) \mathbb{E}\Big\| \nabla f(W_{jM}) \Big\|^2\nonumber\\
&\quad+2L\eta_{j,K,t}^2 H_{\text{cluster}} + 2L^3\eta_{j,K,t}^2 \eta_{j,0,0}^2E^2M^2G^2\nonumber\\
&\quad+2L\eta_{j,K,t}^2 \sum_{K=1}^M q_K^{-1} \sum_{k\in \mathcal{S}_K} p_k^2 s_k^2\nonumber\\
&\stackrel{(vi)}{\le} \eta_{j,K,t}LG \sqrt{\mathbb{E} \Big[ \sum_{k\in \mathcal{S}_{\sigma_{j}(K+1)}} \frac{p_k}{q_{\sigma_{j}(K+1)}} \Big\|w_{j,K,t}^{(k)} - W_{jM}\Big\|^2 \Big]} \nonumber\\
&\quad+\eta_{j,K,t}LG \sqrt{\mathbb{E} \Big\|\overline{w}_{j,K,t} - W_{jM}\Big\|^2} -\frac{\eta_{j,K,t}}{2} \mathbb{E}\Big\| \nabla f(W_{jM}) \Big\|^2\nonumber\\
&\quad+2L\eta_{j,K,t}^2 H_{\text{cluster}} + 2L^3\eta_{j,K,t}^2 \eta_{j,0,0}^2E^2M^2G^2\nonumber\\
&\quad+2L\eta_{j,K,t}^2 \sum_{K=1}^M q_K^{-1} \sum_{k\in \mathcal{S}_K} p_k^2 s_k^2\nonumber\\
&\stackrel{(vii)}{\le} 2\eta_{j,K,t}LG \sqrt{\eta_{j,0,0}^2 E^2M^2G^2} -\frac{\eta_{j,K,t}}{2} \mathbb{E}\Big\| \nabla f(W_{jM}) \Big\|^2\nonumber\\
&\quad+2L\eta_{j,K,t}^2 H_{\text{cluster}} + 2L^3\eta_{j,K,t}^2 \eta_{j,0,0}^2E^2M^2G^2\nonumber\\
&\quad+2L\eta_{j,K,t}^2 \sum_{K=1}^M q_K^{-1} \sum_{k\in \mathcal{S}_K} p_k^2 s_k^2\nonumber\\
&\stackrel{(viii)}{=} \Big(\frac{2L}{T}+\frac{2L^3}{T^2}\Big) G^2 -\frac{1}{2\sqrt{TME}} \mathbb{E}\Big\| \nabla f(W_{jM}) \Big\|^2\nonumber\\
&\quad+\frac{2L}{TME} \Big(H_{\text{cluster}}+\sum_{K=1}^M q_K^{-1} \sum_{k\in \mathcal{S}_K} p_k^2 s_k^2\Big),
\end{align}
\noindent where (i) uses \eqref{w_avg_update} \& \eqref{g}, (ii) uses the equality that $\mathbb{E}_{\xi_{j,K,t}} \nabla f(w_{j,K,t}^{(k)};\xi_{j,K,t}^{(k)})=\nabla f(w_{j,K,t}^{(k)}; \mathcal{D}^{(k)})$ and the inequality that $\|\sum_{k=1}^{m} x_k\|^2\le m\sum_{k=1}^{m} \|x_k\|^2$, (iii) applies Jensen's inequality to $\|\cdot\|^2$ and uses \eqref{SGvar_strongconvex} \& \eqref{SGvar_strongconvex2}, (iv) uses Cauchy-Schwartz inequality, the heterogeneity definition in \eqref{H_cluster}, the $L$-smoothness of $f(\cdot; \mathcal{D}^{(k)})$, the equality that $\mathbb{E}_{\sigma_{j}(K+1)} \Big[\sum_{k\in \mathcal{S}_{\sigma_{j}(K+1)}} \frac{p_k}{q_{\sigma_{j}(K+1)}} \nabla f(W_{jM}; \mathcal{D}^{(k)})\Big]=\nabla f(W_{jM})$ and the fact that $W_{jM}$ is independent of $\sigma_{j}(K+1)$, {(v) uses \ref{df_G} \& \ref{FedAvg_howfar_sc}}, the $L$-smoothness of $f$ and $W_{jM}=\overline{w}_{j,0,0}$ and applies Jensen's inequality to $\|\cdot\|^2$, (vi) uses the $L$-smoothness of $f(\cdot; \mathcal{D}^{(k)})$, $\eta_{j,K,t}\equiv(TME)^{-\frac{1}{2}}\le\frac{1}{4L}$ (since $T\ge \frac{16L^2}{EM}$) and the inequality that $\mathbb{E}\|X\|\le\sqrt{\mathbb{E}\|X\|^2}$ for any random vector $X$, (vii) uses \eqref{FedAvg_howfar_sc} and (viii) substitutes $\eta_{j,K,t}\equiv(TME)^{-\frac{1}{2}}$ into this equation. 

Note that $W_{jM+K'+1}=\overline{w}_{j,K',E}=\overline{w}_{j,K'+1,0}$ ($0\le K'\le M-2$), and $W_{(j+1)M}=\overline{w}_{j,M-1,E}=\overline{w}_{j+1,0,0}$. Hence, by telescoping \eqref{err1step_nc} over $j=0,1,\ldots,T-1$, $K=0,1,\ldots,M-1$, $t=0,1,\ldots,E-1$, it holds that
\begin{align}
&\mathbb{E}[f(W_{TM})-f(W_{0})]\nonumber\\
&\le 2LMEG^2\Big(1+\frac{L^2}{T}\Big)  -\frac{1}{2}\sqrt{\frac{ME}{T}} \sum_{j=0}^{T-1} \mathbb{E}\Big\| \nabla f(W_{jM}) \Big\|^2\nonumber\\
&\quad +2L\Big(H_{\text{cluster}}+\sum_{K=1}^M q_K^{-1} \sum_{k\in \mathcal{S}_K} p_k^2 s_k^2\Big),\nonumber
\end{align}
which by using $f(W_{TM})\ge \inf_{w} f(w)$ and the assumption that $T\ge L^2$ further implies 
\begin{align}\label{err_nc_general}
\frac{1}{T}\sum_{j=0}^{T-1} \mathbb{E}\Big\| \nabla f(W_{jM}) \Big\|^2 \le& \frac{C}{\sqrt{TME}} +C_2\sqrt{\frac{ME}{T}} \le \frac{2C}{\sqrt{TME}},
\end{align}
where $C$ is defined in \eqref{C}, $C_2=8LG^2$, and the second inequality uses the assumption that $ME\le C/C_2=\frac{C}{8LG^2}$. Therefore, to achieve an $\epsilon$-stationary point, we need
\begin{align}
\frac{2C}{\sqrt{TME}}\le\epsilon,
\end{align}
which implies that
\begin{align}
T\ge \frac{4C^2\epsilon^{-2}}{ME} \propto \frac{C^2}{ME}.
%=\frac{4C^2\epsilon^{-2}}{C/C_2}=4CC_2\epsilon^{-2}=32LG^2C\epsilon^{-2}\stackrel{\text{def}}{=} T_{\epsilon},
\end{align}
%and implies \cref{Teps_nc}.

\end{document}